\documentclass{article}

\usepackage{microtype}
\usepackage{graphicx}
\usepackage{subcaption}
\usepackage{booktabs} 

\usepackage{hyperref}

\usepackage{dsfont}

\usepackage[accepted]{icml2026}

\usepackage{amsmath}
\usepackage{amssymb}
\usepackage{mathtools}
\usepackage{amsthm}

\input c.symbols


\usepackage[capitalize,noabbrev]{cleveref}

\theoremstyle{plain}
\newtheorem{theorem}{Theorem}[section]

\theoremstyle{definition}

\theoremstyle{remark}

\usepackage[textsize=tiny]{todonotes}

\icmltitlerunning{Hierarchical Successor Representation}

\begin{document}

\twocolumn[
  \icmltitle{Hierarchical Successor Representation for Robust Transfer}



  \icmlsetsymbol{equal}{*}

  \begin{icmlauthorlist}
    \icmlauthor{Changmin Yu}{cbl}
    \icmlauthor{Máté Lengyel}{cbl,ceu}
  \end{icmlauthorlist}

  \icmlaffiliation{cbl}{Computational and Biological Learning Lab, Department of Engineering, University of Cambridge, Cambridge, United Kingdom}
  \icmlaffiliation{ceu}{Department of Cognitive Science, Central European University, Budapest, Hungary}

  \icmlcorrespondingauthor{Changmin Yu}{changmin.yu98@gmail.com}

  \icmlkeywords{Machine Learning, ICML}

  \vskip 0.3in
]



\printAffiliationsAndNotice{}  

\begin{abstract}
  The successor representation (SR) provides a powerful framework for decoupling
  predictive dynamics from rewards, enabling rapid generalisation across reward
  configurations. However, the classical SR is limited by its inherent policy
  dependence: policies change due to ongoing learning, environmental
  non-stationarities, and changes in task demands, making established predictive
  representations obsolete. Furthermore, in topologically complex environments,
  SRs suffer from spectral diffusion, leading to dense and overlapping features
  that scale poorly. Here we propose the Hierarchical Successor Representation
  (HSR) for overcoming these limitations. By incorporating temporal abstractions
  into the construction of predictive representations, HSR learns stable state
  features which are robust to task-induced policy changes. Applying
  non-negative matrix factorisation (NMF) to the HSR yields a sparse, low-rank
  state representation that facilitates highly sample-efficient transfer to
  novel tasks in multi-compartmental environments. Further analysis reveals that
  HSR-NMF discovers interpretable topological structures, providing a
  policy-agnostic hierarchical map that effectively bridges model-free
  optimality and model-based flexibility. Beyond providing a useful basis for
  task-transfer, we show that HSR's temporally extended predictive structure can
  also be leveraged to drive efficient exploration, effectively scaling to
  large, procedurally generated environments.
\end{abstract}

\vspace{-20pt}
\section{Introduction}
\label{sec: intro}
Despite significant progress in reinforcement learning (RL) over the past two
decades~\cite{sutton1998reinforcement, matsuo2022deep}, achieving robust and
efficient generalisation remains a major challenge~\cite{cobbe2019quantifying}.
For instance, an agent navigating in a maze may encounter changes in goal
locations or reward configurations while the environmental geometry remains
unchanged. In such settings, the key to efficient learning is to construct
robust, reusable state representations that capture persistent structure of the
environment~\citep{zhang2020learning, agarwal2021contrastive}, allowing for
rapid adaptation when goals change. However, achieving robust transfer following
task changes is difficult since the policy that is optimal for one task may be
suboptimal for another, and associated policy adaptation could invalidate
representations learned under the previous behaviour.

The successor representation (SR) framework offers a promising avenue for such
transfer~\citep{dayan1993improving}. By decomposing value function into a linear
combination of expected state occupancy and state-dependent reward function, the
SR enables generalisation through rapid policy re-evaluation when the reward
function changes. However, two bottlenecks limit the practical scope of SRs in
transfer-oriented regimes. First, SRs are inherently policy dependent, since the
predictive occupancy changes when the policy is updated. This issue becomes
acute when generalisation requires not only re-evaluating state values but also
re-optimising behaviour. Second, SR features are dominated by diffusion, leading
to globally supported, overlapping features, a phenomenon particularly evident
in spectral components of SRs~\cite{mahadevan2007proto,
stachenfeld2017hippocampus}. The resultant state features lack interpretability
and scale poorly as the environment becomes larger or more topologically complex
(e.g., compartmentalised; Figure~\ref{fig: four_room_motivation}b).
Consequently, it is preferable to construct a representation that highlights
local transition structure rather than spreading its mass across distant regions
of the state space.

We propose the Hierarchical Successor Representation (HSR) to address these
limitations. The central idea is to extend the classical SR formulation by
incorporating temporal abstractions -- formalised through the ``options''
framework~\cite{sutton1999between}. Options define interpretable, temporally
extended courses of actions, with distinct policies, initiation and terminal
conditions. By reasoning at this higher level of temporal abstraction, HSR
aggregates dynamics over interpretable behavioural segments rather than
single-step actions, yielding a predictive representation that is less sensitive
to task-induced variations in low-level control. Intuitively, whilst optimal
primitive actions may vary across tasks, the high-level strategy (e.g., go
through bottleneck states into the next room) remains stable. The HSR leverages
this stability to build a task-agnostic representation, enabling more efficient
generalisation without requiring expensive re-learning of the state feature
maps.

To further promote interpretability and scalability, we seek a compact low-rank
basis of these predictive representations. While existing spectral approaches
use eigenvectors of transition structure to construct geometry-aware
bases~\cite{mahadevan2007proto, stachenfeld2017hippocampus}, the resultant
components are typically globally supported and sign-indefinite. We therefore
employ non-negative matrix factorisation to obtain sparse, localised, and
interpretable factors~\cite{lee1999learning}. In multi-compartmental domains, we
show that these bases align naturally with topological features such as
bottleneck states, and yield a policy-agnostic multi-scale map that can be
reused across tasks without sacrificing optimality. These features hence
facilitate sample-efficient transfer across reward configurations. Beyond
generalisation, we find that HSR can be used to guide intrinsically motivated
exploration in sparse-reward environments~\citep{schmidhuber1991curious,
pathak2017curiosity}, and enables efficient scaling to larger, procedurally
generated maze environments where existing exploration strategies become
prohibitively expensive. Importantly, the utility of the proposed HSR framework
remains robust with respect to environmental geometry, option definition, and
the existence of pre-constructed option set. Collectively, our results suggest
that integrating temporal abstraction into predictive representations leads to a
robust, interpretable state representation, and provides a practical bridge
between model-free efficiency and model-based
flexibility~\cite{gershman2018successor}: HSR retains the computational
advantages of SR-style predictive representations whilst producing hierarchical,
interpretable structure that supports robust transfer.
\vspace{-20pt}

\begin{figure*}[ht]
  \begin{center}
    \centerline{\includegraphics[width=.98\linewidth]{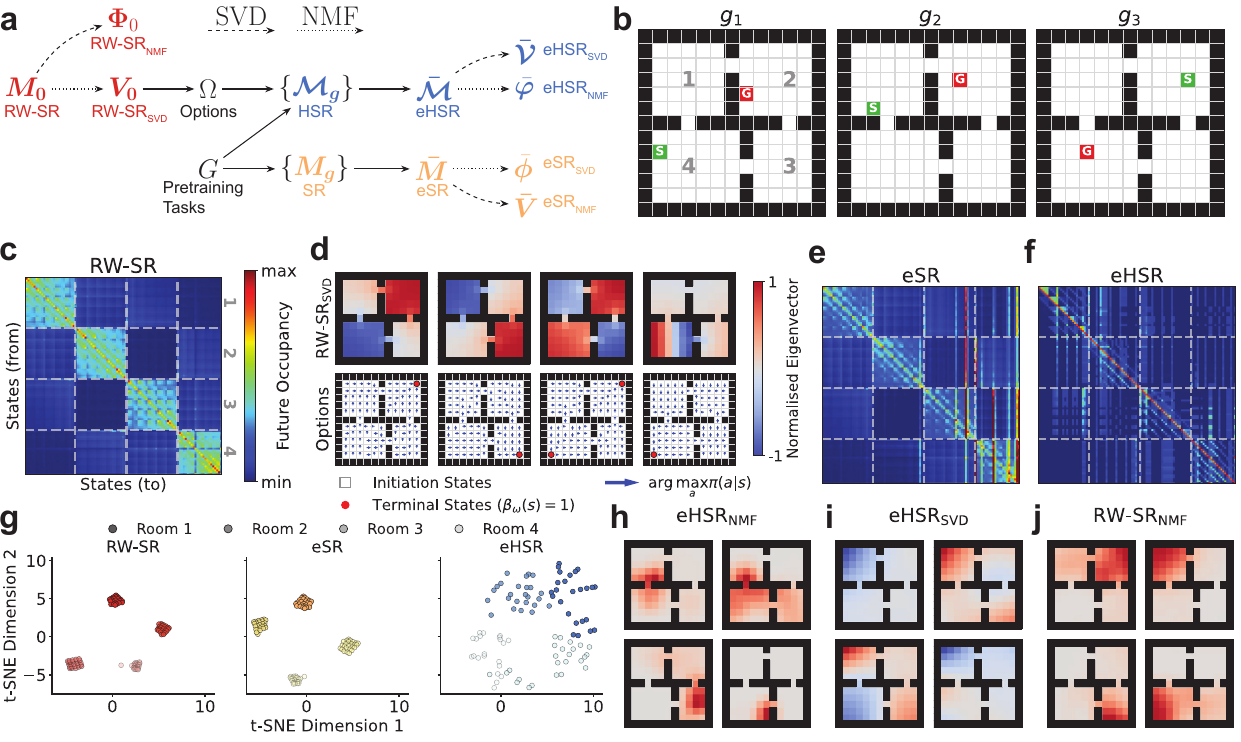}}
    \vspace{-5pt}
    \caption{
      \textbf{Temporal abstraction yields hierarchical successor representation.} 
      \textbf{a.} Schematic of the computational process underlying the
      construction of hierarchical successor representations, and corresponding
      low-dimensional basis through NMF($\vec \Phi$) and singular value
      decomposition (SVD; $\vec V$). 
      \textbf{b.} Exemplar pretraining regimes ($G$).
      \textbf{c.} SR matrix corresponding to the random-walk policy (RW-SR).
      Note that state indices were permuted to respect the topological structure
      of the four-room environment (indicated by gray dashed lines).
      \textbf{d.} Principal eigenvectors (ranked by eigenvalues) of the RW-SR
      matrix (top), as well as their corresponding eigenoption-specific policies
      (bottom). 
      \textbf{e.} Expected SR matrix (eSR; $\bar{\vec M}$).
      \textbf{f.} Expected HSR matrix (eHSR; $\bar{\vec \mathcal{M}}$).
      \textbf{g.} Two-dimensional t-SNE projections~\cite{maaten2008visualizing}
      of the row-space of RW-SR (left), eSR (middle), and eHSR (right).
      \textbf{h.} Principal NMF basis vectors (ranked by basis norm) of the eHSR
      matrix.
      \textbf{i.} Principal eigenvectors of the expected eSR matrix.
      \textbf{j.} Same as \textbf{j}, but for the RW-SR matrix.
      Note that all presented basis vectors were normalised by their
      corresponding maximum absolute values, hence leaving their signs
      invariant.
    }
    \label{fig: four_room_motivation}
  \end{center}
  \vspace{-25pt}
\end{figure*}

\section{Preliminaries}
\label{sec: background}
\textbf{Reinforcement Learning.}
We consider the classical RL setup in discrete Markov Decision Processes (MDPs;
\citealt{sutton1998reinforcement}), defined by the tuple, $\langle \mathcal{S},
\mathcal{A}, \mathcal{P}, \mathcal{R}, \gamma\rangle$, where $\mathcal{S}$ and
$\mathcal{A}$ denote the state (of size $N$) and action spaces (of size $M$),
respectively, $\mathcal{P}: \mathcal{S}\times\mathcal{A} \rightarrow
\Delta\left(\mathcal{S}\right)$ is the state transition probability
($\Delta(\mathcal{S})$ denotes a probability measure over $\mathcal{S}$),
$\mathcal{R}: \mathcal{S}\rightarrow\mathbb{R}$ is the reward function, and
$\gamma\in[0, 1]$ is the temporal discounting factor. The goal of an RL
algorithm is to learn a policy, $\pi: \mathcal{S}\times\mathcal{A}\rightarrow
[0, 1]$, such that the agent achieves maximum cumulative future reward starting
from any state. Formally, the policy-dependent value and action-value functions
are defined as the expected discounted future return, $V^\pi(s) =
\mathbb{E}_\pi\left[\sum_{t=0}^{\infty}\gamma^t \mathcal{R}(s_t)|s_0=s\right]$.


\textbf{Successor Representation.}
The SR framework enables decomposition of the (action) value function into a
linear combination of time-collapsed predictive representation and
state-dependent reward function~\cite{dayan1993improving}.
\begin{equation}
  V^\pi(s) = \sum_{s'\in\mathcal{S}}
  \mathbb{E}_\pi\left[\sum_{t=0}^{\infty}\gamma^t \mathds{1}(s_t, s')|s_0=s\right]
  \mathcal{R}(s')
  = \vec M^\pi\cdot \vec R\,,
  \label{eq: sr_def}
\end{equation}
It is clear from the above decomposition that the SR formulation enables rapid
generalisation across varying reward configurations, given efficient one-step
linear update for policy re-evaluation under new reward functions. However, it
is often necessary to simultaneously adapt the policy for achieving optimal
performance under new tasks. Drawing upon similar recursive definitions of value
functions and the SR, it is hence possible to learn the SR matrix online with
temporal-difference (TD) learning~\cite{dayan1993improving}. Specifically, given
the state-transition tuple $(s_t, a_t, s_{t+1})$, and learning rate
$\alpha\in\mathbb{R}$, the update is as following.
\begin{equation}
  \hspace{-5pt}\hat{\vec M}_{s_ts'} \leftarrow \hat{\vec M}_{s_ts'} + \alpha\left(
    \mathds{1}(s_{t}, s') + \gamma\hat{\vec M}_{s_{t+1}s'} - \hat{\vec M}_{s_ts'}
  \right)\,,
  \label{eq: sr_td}
\end{equation}

\textbf{Hierarchical Reinforcement Learning.} Under the hierarchical RL
setup, the ``option'' framework enables the incorporation of temporal
abstraction in RL: instead of interacting with the environment at every time
step, the agent is able to engage in temporally extended, interpretable action
sequences~\cite{sutton1999between}. Formally, an option $\omega_i \in \Omega$ is
defined by the tuple, $\langle \mathcal{I}_i, \pi_i, \beta_i\rangle$, where
$\mathcal{I}_i\subseteq \mathcal{S}$ denotes the initiation set such that an
option $\omega_i$ is available in state $s$ if and only if $s \in
\mathcal{I}_i$, $\pi_i: \mathcal{S}\times\mathcal{A}\rightarrow [0, 1]$ denotes
the option-specific policy, and $\beta_i: \mathcal{S}\rightarrow [0, 1]$ denotes
the option's termination function. Under the option framework, the agent learns
a high-level policy, $\mu: \mathcal{S}\times\bar{\mathcal{A}} \rightarrow [0,
1]$, where $\bar{\mathcal{A}} = \mathcal{A}\cup\Omega$ denotes the extended
action space. Options can be either explicitly pre-defined given expert
knowledge~\cite{sutton1999between, dietterich2000hierarchical, wang2021cpg} or
automatically discovered~\cite{mcgovern2001automatic, bacon2017option,
machado2017laplacian, jinnai2019discovering}. Here we focus on the
``eigenoption'' framework~\citep{machado2017laplacian}, a task-agnostic
option-discovery approach that exploits the transition structure of the
environment. Specifically, given the random-walk SR matrix, $\vec M_0 \in [0,
1]^{N\times N}$, each eigenvector $\vec v_i \in \mathbb{R}^{N}$, such that $\vec
M_0\vec v_i = \lambda_i\vec v_i$, defines an option $\omega_i$. The
option-specific policy is obtained by solving the MDP with pseudo-reward,
defined as following.
\begin{equation}
  r_i(s, s') = \vec v_{i, s'} - \vec v_{i, s}\,, \forall s, s' \text{ where } \exists a \text{ s.t. } \mathcal{P}(s'|s, a) > 0\,,
  \label{eq: eigenoption_reward}
\end{equation}
Upon learning converges with respect to the pseudo-reward function, the optimal
pseudo value function, $q^\ast_i(s, a)$, for all $s$ and $a$, defines the
option-specific policy: $\pi_i(a|s) = \mathds{1}(a, \argmax_{a'}q^\ast_i(s,
a'))$. The termination function is a binary function such that $\beta_i(s) = 1$
if $q^\ast_i(s, a') \leq 0$ $\forall a'\in\mathcal{A}$, and $0$ otherwise (i.e.,
local maximum of the pseudo-reward function). The initiation set is defined as
the complement to the set of terminal states, $\mathcal{I}_i = \mathcal{S}
\backslash \{s | \beta_i(s) = 1\}$. In practice, options are defined by the set
of principal eigenvectors of the random-walk SR matrix, which capture the global
smoothness of the environment~\cite{shi2000normalized}, with decreasing
timescales of diffusion (with eigenvalues) in the classical four-room
environment (Figure~\ref{fig: four_room_motivation}d).
\vspace{-5pt}

\section{Hierarchical Successor Representation}
\label{sec: hsr_method}
\vspace{-3pt}
Despite the SR framework provides a predictive state representation that enables
rapid transfer in response to reward changes, reaching optimal transferability
still necessitates updating the SR corresponding to policy changes in a
model-free fashion (Equation~\ref{eq: sr_td}). To achieve robust generalisation,
we seek a state representation that could retain the predictive nature of the
classical SR, but of weaker policy-dependence. We extend the SR formulation via
incorporating temporally extended options, leading to the Hierarchical SR (HSR),
defined as the predictive state occupancy distribution with respect to the
high-level policy, $\mu$ (Equation~\ref{eq: sr_def}).
\begin{equation}
    \mathcal{M}^\mu_{ss'} \equiv 
    \mathbb{E}_{\mu}\left[
      \vec M^{\bar{a}}_{ss'} + \mathbb{E}_{\bar{a}}\left[
        \gamma^{\tau_{s\bar{a}}}\mathcal{M}^\mu_{s_{\tau_{s\bar{a}}}s'}
      \right]
    \right]\,,
  \label{eq: hsr_def}
\end{equation}
where $\tau_{s\bar{a}} = \mathbb{E}_{\mathcal{P},
\pi_{\bar{a}}}\left[\sum_{t=0}^{\infty} \mathds{1}(\beta_{\bar{a}}(s_t),
0)|s_0=s\right]$ denotes the expected duration following action $\bar{a}$, and
$\vec M^{\bar{a}}$ denotes the action-specific SR matrix, corresponding to the
policy $\pi_{\bar{a}}$, which is defined to be the single-action policy if
$\bar{a}\in\mathcal{A}$, and option-specific policy if $\bar{a}\in\Omega$. We
have overloaded the notation such that each primitive action is treated as a
one-step pseudo-option, such that given any $(s,
a)\in\mathcal{S}\times\mathcal{A}$, we have $\mathcal{I}_{sa} = \{s\}$,
$\pi_{sa}(a'|s) = \mathds{1}(a, a')$, and $\beta_{sa}(s') = 1$ if and only if
$\mathcal{P}(s'|s, a) > 0$ and $0$ otherwise. The recursive definition yields
the following HSR Bellman operator, $\mathcal{T}$ (full derivations can be found
in \ref{sec: hsr_derivations}).
\begin{equation}
  \begin{aligned}
    &(\mathcal{T}^\mu\mathcal{M})_{ss'} = \mathcal{B}^\mu_{ss'} + \sum_{\tilde{s}\in\mathcal{S}}\mathcal{G}^\mu_{s\tilde{s}}\mathcal{M}_{\tilde{s}s'}\,, \text{ where }\\
    &\mathcal{B}^\mu_{ss'} = \sum_{\bar{a}\in\bar{\mathcal{A}}}\mu(\bar{a}|s)\vec M^{\bar{a}}_{ss'}\,, \quad
    \mathcal{G}^\mu_{ss'} = \sum_{\bar{a}\in\bar{\mathcal{A}}}\mu(\bar{a}|s)\vec F^{\bar{a}}_{ss'}\,,
  \end{aligned}
  \label{eq: hsr_bellman}
\end{equation}
where $\vec{F}^{\bar{a}}_{s_j s_j} =
\mathbb{E}\left[\gamma^{\tau_{s_i\bar{a}}}\mathds{1}(s_{\tau_{s_i\bar{a}}}, s_j)
| s_0=s_i\right]$ is the discounted termination kernel for (pseudo) option
$\bar{a}$. With a bit of algebra (Appendix~\ref{sec: hsr_derivations}), we can
show that $\vec F^{\bar{a}}$ can be computed analytically as following.
\begin{equation}
  \vec F^{\bar{a}} = \vec M^{\bar{a}}\text{diag}(\vec\beta_{\bar{a}})\,,
\end{equation}
where $\text{diag}(\vec \beta_{\bar{a}})$ is the diagonal matrix with the $i$-th
diagonal element equals $\beta_{\bar{a}}(s_i)$. Similar to the standard Bellman
operator, $\mathcal{T}$ is a contraction mapping.
\begin{theorem}[Contraction of HSR Bellman Operator]
  Let $\mathcal{T}$ be the HSR Bellman operator defined by Equation~\ref{eq:
  hsr_bellman}. For any discount factor $\gamma < 1$ and option durations $\tau
  \geq 1$, $\mathcal{T}$ is a contraction mapping with respect to the max-norm
  (proof can be found in \ref{sec: bellman_contraction_proof}).
  \begin{equation}
    || \mathcal{T}^{\mu} \mathcal{M} - \mathcal{T}^{\mu} \mathcal{M}' ||_{\infty} \leq 
    \gamma || \mathcal{M} - \mathcal{M}' ||_{\infty}
\end{equation}
for any $\mathcal{M}$ and $\mathcal{M}'$.
\label{theorem: hsr_contraction}
\vspace{-5pt}
\end{theorem}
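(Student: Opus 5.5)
The plan is the standard argument for Bellman-type operators. Since $\mathcal{T}^\mu$ is affine in $\mathcal{M}$, the offset $\mathcal{B}^\mu$ cancels in the difference:
\begin{equation*}
  (\mathcal{T}^\mu\mathcal{M} - \mathcal{T}^\mu\mathcal{M}')_{ss'} = \sum_{\tilde{s}\in\mathcal{S}}\mathcal{G}^\mu_{s\tilde{s}}\bigl(\mathcal{M}_{\tilde{s}s'} - \mathcal{M}'_{\tilde{s}s'}\bigr)\,.
\end{equation*}
The entries of $\mathcal{G}^\mu$ are non-negative, because each $\vec F^{\bar{a}}_{s\tilde{s}}$ is an expectation of a non-negative quantity and $\mu(\bar a|s)\ge 0$. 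Hence, with $\|\cdot\|_\infty$ the entrywise max-norm,
\begin{equation*}
  \bigl|(\mathcal{T}^\mu\mathcal{M} - \mathcal{T}^\mu\mathcal{M}')_{ss'}\bigr| \leq \Bigl(\sum_{\tilde{s}}\mathcal{G}^\mu_{s\tilde{s}}\Bigr)\,\|\mathcal{M}-\mathcal{M}'\|_\infty\,.
\end{equation*}
The whole proof therefore reduces to showing that every row sum of $\mathcal{G}^\mu$ is at most $\gamma$.

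The key step is to bound the row sums of each termination kernel $\vec F^{\bar a}$. By definition,
\begin{equation*}
  \sum_{\tilde{s}}\vec F^{\bar{a}}_{s\tilde{s}} = \mathbb{E}\bigl[\gamma^{\tau_{s\bar{a}}}\bigr]\,,
\end{equation*}
since the indicators $\mathds{1}(s_{\tau},\tilde s)$ sum to one over $\tilde s$ whenever the option terminates. If the option terminates with probability less than one, this sum is even smaller, because non-terminating trajectories contribute $0$. Because every (pseudo-)option lasts at least one step, $\tau\ge 1$ and $\gamma\in[0,1)$, so $\gamma^{\tau}\le\gamma$ pointwise and the row sum is at most $\gamma$. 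Averaging over $\bar a$ with weights $\mu(\bar a|s)$, which sum to one, gives $\sum_{\tilde s}\mathcal{G}^\mu_{s\tilde s}\le\gamma$. Taking the maximum over $(s,s')$ then yields the claimed inequality.

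The main obstacle is a notational ambiguity in the paper. It writes $\tau_{s\bar a}$ as an \emph{expected} duration, and it gives $\vec F^{\bar a}$ in closed form as $\vec M^{\bar a}\mathrm{diag}(\vec\beta_{\bar a})$. To apply the bound above, I will interpret $\tau$ inside the expectation defining $\vec F^{\bar a}$ as the random termination time with $\tau\ge1$ almost surely.

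I will also check consistency with the closed form, which requires care about where discounting enters $\vec M^{\bar a}$. The paper's convention is that $\vec M^{\bar a}$ counts only the time steps before termination, with termination occurring after at least one transition. Under that convention, $\vec M^{\bar a}\mathrm{diag}(\vec\beta_{\bar a})$ should be read as $\gamma\,\vec M^{\bar a}P^{\pi_{\bar a}}\mathrm{diag}(\vec\beta_{\bar a})$ restricted to first hitting. Its row sums are then $\mathbb{E}[\gamma^\tau]$, which confirms the bound.

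For primitive actions the pseudo-option terminates after exactly one step, so the row sum of $\vec F^{a}$ is exactly $\gamma$. This shows the constant $\gamma$ is tight and cannot be improved in general.
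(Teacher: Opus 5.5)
Your proposal is correct and follows the same route as the paper. Both arguments cancel $\mathcal{B}^\mu$, bound each entry of the difference by the row sum of $\mathcal{G}^\mu$ times the max-norm, and then bound that row sum by $\sum_{\bar a}\mu(\bar a|s)\gamma^{\tau}\le\gamma$ using $\tau\ge 1$. Your version is somewhat more careful than the paper's: you state explicitly that $\mathcal{G}^\mu$ is non-negative, and you treat $\tau$ as a random termination time, giving $\mathbb{E}[\gamma^{\tau}]$. The paper instead writes $\gamma^{\tau_{s\bar a}}$ with $\tau$ described as an expected duration, and leaves both of these points implicit.
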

Consequently, the HSR Bellman operator defines an equivalent iterative update
rule for learning the HSR online, similar to the TD-learning for standard SR
(Equation~\ref{eq: sr_td}): $\hat{\mathcal{M}}_{s_ts'} \leftarrow
\hat{\mathcal{M}}_{s_ts'} + \alpha\delta^{\mathcal{M}}_t$. Given the
state-transitions trajectory following the (pseudo) option, $\bar{a}_t$,
$\left(\{(s_{t+k}, \bar{a}_{t+k})\}_{k=0}^{\tau_t},
s_{t+\tau_t+1}\right)$\footnote{Note that we have overloaded notations to denote
$\tau_t := \tau_{s_t\bar{a}_t}$.}, the TD-error defined as following.
\begin{equation}
  \delta^{\mathcal{M}}_t = \left(
    \sum_{k=0}^{\tau_t}\gamma^k \mathds{1}(s_{t+k}, s') + 
    \gamma^{\tau_t+1}\hat{\mathcal{M}}_{s_{\tau_t+1}s'} - \hat{\mathcal{M}}_{s_ts'}
  \right)
\label{eq: hsr_td}
\end{equation}

While the HSR inherently supports online TD learning to track a changing policy,
this adaptation can be sensitive to transient behavioural shift. Therefore, in
transfer scenarios where pre-exposure to the environment is possible via a
distribution of pre-training tasks (Figure~\ref{fig: four_room_motivation}b), we
advocate for a stable offline construction. Specifically, we compute the
\textit{Expected} HSR (eHSR) by averaging the HSRs derived from optimal policies
in these pre-training tasks, yielding a generalised basis that further relaxes
the policy dependence in the constructed predictive representation. 

\textbf{Low-Rank Basis of HSR.} Constructing low-dimensional state
representations via the decomposition of transition dynamics has long served as
an effective representation learning approach in RL. Seminal works, such as
Proto-value functions (PVFs;~\citealt{mahadevan2007proto}) and spectral successor
features~\cite{stachenfeld2017hippocampus}, rely on eigenvectors of the
Laplacian operator and SR, respectively, to define a geometric basis. Following
this paradigm, we also consider using eigenvectors of the HSR as state
representations, which indeed captures the globally supported smooth dynamics
(Figure~\ref{fig: four_room_motivation}i). However, the HSR introduces a
temporally extended, topology-aware predictive representation, which is
fundamentally different from the classical SR. Classical SRs exhibit topological
collapse: the predictive features of intra-compartment states are tightly
clustered, reducing the effective dimension within each topological region to a
single point mass (Figure~\ref{fig: four_room_motivation}g). This implies the
classical SR is numerically low-rank, a regime where singular value
decomposition (SVD) is optimal for capturing the global modes of variation (also
indicated by the eigenvalue distribution; Figure~\ref{fig:
hsr_spectral_supplementary}a). 

In contrast, HSR dynamics is piecewise-smooth, characterised by manifolds with
effective intra-compartment dispersion, whilst preserving the inter-compartment
differences (Figure~\ref{fig: four_room_motivation}g). Whilst SVD can compress
the HSR, its orthogonal basis vectors inevitably ``smear'' the multi-scale
structure of these manifolds across global components, leading to oscillatory
ringing artefacts induced by orthogonal low-rank reconstruction (also known as
the Gibbs phenomenon; Figure~\ref{fig: four_room_motivation}i;
\citealt{stephane1999wavelet}) and heavier-tailed spectral components
(Figure~\ref{fig: hsr_spectral_supplementary}a). Consequently, we employ
non-negative matrix factorisation (NMF) to extract the $K$-dimensional sparse,
localised basis~\cite{lee1999learning, hoyer2004non}.
\begin{equation}
  \mathcal{M} = \vec \varphi\cdot \vec H\,, 
  \text{ where } \vec\varphi \in\mathbb{R}^{N\times K}, 
  \vec H \in\mathbb{R}^{K\times N} \text{ s.t. } \vec \varphi, \vec H\geq 0\,,
\end{equation}

The locally dispersed geometry of the HSR enables the NMF to discover the
ground-truth underlying generative parts (Figure~\ref{fig:
four_room_motivation}h; \citealt{donoho2003does}). However, the feature collapse
in standard SRs makes ``parts-based'' decomposition via NMF ill-posed, due to
the fact that there is insufficient intra-compartment variance to define
distinct basis components, leading to trivial and overlapping factorisations
(Figure~\ref{fig: four_room_motivation}j; Figure~\ref{fig:
hsr_basis_supplementary}b). Schematics of the full computational process for
deriving HSR and its low-rank decomposition can be found in Figure~\ref{fig:
four_room_motivation}a, and corresponding pseudocode can be found in
Algorithm~\ref{alg: hsr_nmf}.
\vspace{-5pt}

\begin{figure*}[ht]
  \begin{center}
    \centerline{\includegraphics[width=.98\textwidth]{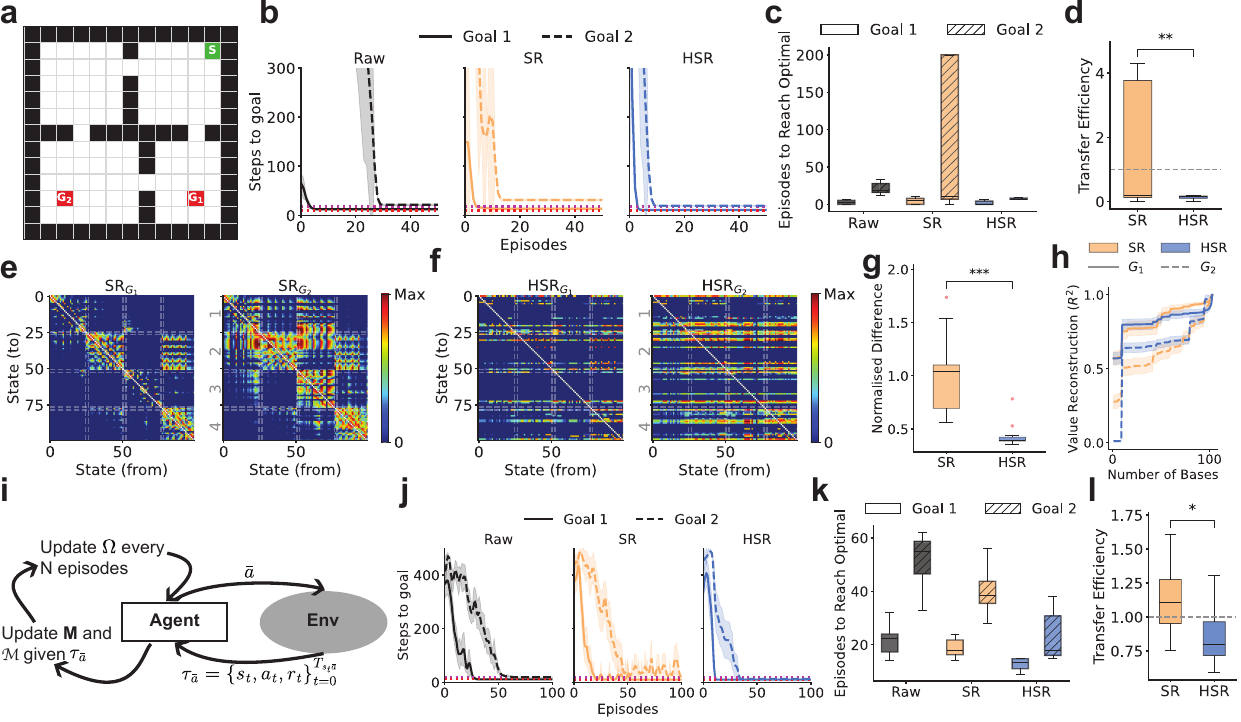}}
    \vspace{-5pt}
    \caption{
    \textbf{HSR provides a stable state representation and enables sample-efficient transfer across tasks with shared transition dynamics.}
      \textbf{a.} Schematics of the four-room environment. All agents were
      firstly trained to reach $G_1$, and subsequently transferred to the new
      task with goal location $G_2$, both from a fixed start location. 
      \textbf{b.} Number of steps to reach the goal location (mean $\pm$ s.e.)
      for Q-learning agents with linear function approximation, given different
      state representations. All state representations (apart from one-hot
      representation) were simultaneously trained with the value function.
      Dashed horizontal red and magenta lines indicate optimal number of steps
      to reach $G_1$ ($10$) and $G_2$ ($18$) from the shared start state.
      \textbf{c.} Number of training episodes to reach optimal performance in
      the $G_1$ and $G_2$ tasks, for all agents in \textbf{b} (number set of
      $200$ if agents fail to reach optimal performance within $200$ episodes).
      \textbf{d.} Transfer efficiency (normalised ratio between number of
      training episodes to reach optimal performance in $G_1$- and $G_2$-tasks)
      for SR- and HSR-based agents (two-sided two-sample t-test; $p = 0.008$,
      $df=38$).
      \textbf{e.} SR matrices (log-scale for visual clarity) after the
      corresponding agent were trained to reach optimal performance in $G_1$
      (left) and $G_2$ (right) tasks. Note that rows and columns of SR matrices
      are permuted to restore the local topological structure of the environment
      (Figure~\ref{fig: four_room_motivation}b). We omit showing diagonal
      elements for visual clarity. 
      \textbf{f.} Same as \textbf{e}, but for HSR matrices. 
      \textbf{g.} Degrees of change in predictive representation
      $\left(\frac{||M_1 - M_2||^2_F}{||M_1||^2_F}\right)$ after agents were
      trained in $G_1$ and $G_2$ tasks, for SR and HSR matrices, respectively
      (two-sided two-sample t-test; $p < 0.001$, $\text{df} = 38$). 
      \textbf{h.} Reconstruction $R^2$ scores of ground-truth optimal value
      functions (computed via dynamic programming) for $G_1$ and $G_2$ tasks
      given varying number of SR/HSR basis after agents are trained to reach
      optimal performance in $G_1$ tasks.
      \textbf{i.} Schematics of the HSR framework with online-constructed option
      set. 
      \textbf{j.} Same as \textbf{b}, but for agents with online-constructed
      option sets.
      \textbf{k.} Same as \textbf{c}, but for agents in \textbf{j}.
      \textbf{l.} Same as \textbf{d}, but for SR and HSR agents with online
      constructed options (two-sided two-sample t-test; $p = 0.026$, $df=38$.). 
    }
    \label{fig: four_room_row_features_main}
  \end{center}
  \vspace{-25pt}
\end{figure*}

\section{Results}
\label{sec: results}
We evaluate the efficacy of the proposed HSR-based state representations in
facilitating robust transfer, interpretable state abstractions, and scalable
exploration. The empirical experiments aim to test three central hypotheses: (a)
HSR features are more robust to task-induced policy changes than standard SRs
under transfer-learning scenarios; (b) the topological structure of HSR is
uniquely amenable to NMF decomposition, yielding interpretable state features
that facilitates stronger generalisation than existing spectral-based
decomposition methods; and (c) these temporally extended predictive features
enable efficient and scalable exploration in large, procedurally generated maze
environments. All agents in transfer experiments were implemented based on
Q-learning with linear function approximation. Unless otherwise stated,
empirical evaluations are based on $20$ random seeds. Implementation details can
be found in Appendix~\ref{sec: implementation_detail}.

\begin{figure*}[ht]
  \begin{center}
    \centerline{\includegraphics[width=\textwidth]{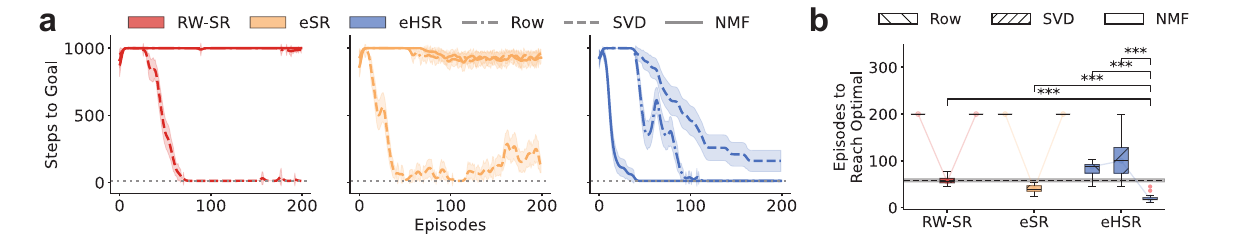}}
    \vspace{-5pt}
    \caption{
      \textbf{NMF basis of HSR supports sample-efficient transfer.}
      \textbf{a.} Training curves (left) and number of training episodes to
      reach optimal performance (right) in $G_1$ tasks (Figure~\ref{fig:
      four_room_row_features_main}a) for Q-learning agents with linear function
      approximation, given different low-dimensional basis as state
      representations. All agents were assumed to have received necessary
      pretraining for constructing base matrices (SR/HSR) before corresponding
      low-dimensional basis were extracted. Gray dotted line indicates optimal
      number of steps to reach $G_1$.
      \textbf{b.} Number of training episodes required to reach optimal
      performance for all agents in \textbf{a} (two-sided Wilcoxon signed-rank
      test; $\text{RW-SR}_{\text{SVD}}$ vs $\text{HSR}_{\text{NMF}}$: $p =
      7.91\times 10^{-17}$; $\text{ESR}_{\text{SVD}}$ vs
      $\text{HSR}_{\text{NMF}}$: $p = 2.93\times 10^{-9}$;
      $\text{HSR}_{\text{SVD}}$ vs $\text{HSR}_{\text{NMF}}$: $p =
      6.73\times10^{-10}$; $\text{HSR}_{\text{NMF}}$ vs
      $\text{HSR}_{\text{Row}}$: $p = 4.59\times10^{-18}$; $N = 20$ for all
      tests). Gray dashed horizontal line indicates the same number for the
      baseline agent with one-hot state encoding (shaded area indicates s.e.). 
      }
    \label{fig: four_room_nmf_svd_main}
  \end{center}
  \vspace{-25pt}
\end{figure*}

\vspace{-2pt}
\subsection{HSR facilitates robust few-shot transfer.}
\vspace{-2pt}
We first investigate the robustness of HSR under changing reward configurations
in the classical four-room environment (Figure~\ref{fig:
four_room_row_features_main}a). Here we assume both SRs and HSRs are learned
online as policy changes, both within and between tasks. To demonstrate the
utility of canonical HSR features, we take rows of SR and HSR matrices as state
features. We also implement an agent with fixed one-hot state encoding (``Raw'')
as the zero-transfer baseline. For fair comparison with the HSR-based agents,
all agents were trained to choose actions from the augmented action space,
consists of primitive actions and the set of $8$ principal eigenoptions
(Figure~\ref{fig: hsr_basis_supplementary}a; note that standard SR matrices are
still updated on a stepwise basis during the course of executing options).
Agents were trained to navigate to a specific terminal goal state ($G_1$) until
convergence, and subsequently transferred to navigate to a new goal ($G_2$)
starting from the same initial location\footnote{Goal locations are selected
such that they do not coincide with any of the terminal states of available
eigenoptions.}. Transitions into goal states yield a reward of $1$, whereas all
other transitions yield $0$ reward. 

Both the standard and hierarchical SR features enable more sample-efficient
transfer relative to the one-hot state encoding baseline (Figure~\ref{fig:
four_room_row_features_main}b, c). Consistent with the policy-dependence limitation
of standard SRs, that the policy re-optimisation invalidates the established
predictive representations, we observe that agents utilising row features of
random-walk SR ($\text{RW-SR}_{\text{row}}$) suffer from significant performance
degradation upon goal-switching (Figure~\ref{fig: four_room_row_features_main}b,
c). In contrast, agents utilising the HSR row features exhibit strong few-shot
transfer, rapidly adapting to the new goal with significantly fewer episodes
upon transfer. Consistent with the visual examination of learning curves and our
intuitions, HSR agents achieve a significantly higher transfer efficiency
(Figure~\ref{fig: four_room_row_features_main}d), where transfer efficiency is
quantified by the ratio of number of training episodes to reach optimal
performance in $G_1$ and $G_2$ tasks, normalised by the same numbers given the
baseline agent
($\frac{N^{\text{HSR/SR}}_{G_2}/N^{\text{Raw}}_{G_2}}{N^{\text{HSR/SR}}_{G_1}/N^{\text{Raw}}_{G_1}}$)..

We attribute this robustness to the stability of the underlying representation,
arising from the temporal extended components in constructing the HSR that
decouples predictive representations from transient variations in low-level
control. Comparing the predictive representation after learning asymptotes in
the two tasks reveals that standard SR matrices undergo drastic reorganisation
in order to conform with the new optimal policy (Figure~\ref{fig:
four_room_row_features_main}e), whereas the HSR matrices are less variable
following policy changes (Figure~\ref{fig: four_room_row_features_main}f).
Quantitative comparison of magnitudes of change confirmed that the relative
change in HSR matrices following policy adaptation was significantly lower than
that of the SR matrix (Figure~\ref{fig: four_room_row_features_main}g). Such
stability translates to elevated capacity in value estimation. We compare the
reconstruction of the optimal value functions of both the current task ($G_1$)
and the unseen task ($G_2$) give established state features after learning in
the $G_1$ task, with varying numbers of bases. We observe that SR and HSR
explain the optimal value function of the current task equally well, but the HSR
basis consistently yields significantly more accurate reconstruction of optimal
value functions in unseen tasks (Figure~\ref{fig:
four_room_row_features_main}h). This suggests that the HSR features span a
subspace that is more robustly aligned with the manifold of potential value
functions across the task distribution. Note that the improved transferability
and stability of HSR features, relative to standard SR features, are not
artefacts of the online learning process. This is further corroborated by
similar empirical results when agents are trained with pre-trained state
representations (Figure~\ref{fig: hsr_eigenoptions_fully_trained}). Moreover,
the transfer efficiency induced by HSR features remains robust across
environments with different geometry (Figure~\ref{fig: multi_mdp_transfer}) and
option classes beyond eigenoptions (Figure~\ref{fig: option_ablation}). 

\textbf{Online Option Construction.} Notably, the HSR framework is not
contingent on the access to a set of pre-existing options, and can incorporate
online-constructed option set (Figure~\ref{fig: four_room_row_features_main}i;
see Appendix~\ref{sec: hsr_online_option} and Algorithm~\ref{alg:
hsr_online_option} for further details) whilst retaining its utility in
supporting stronger transfer efficiency (Figure~\ref{fig:
four_room_row_features_main}j-l).

\begin{figure*}[ht]
  \begin{center}
    \centerline{\includegraphics[width=\linewidth]{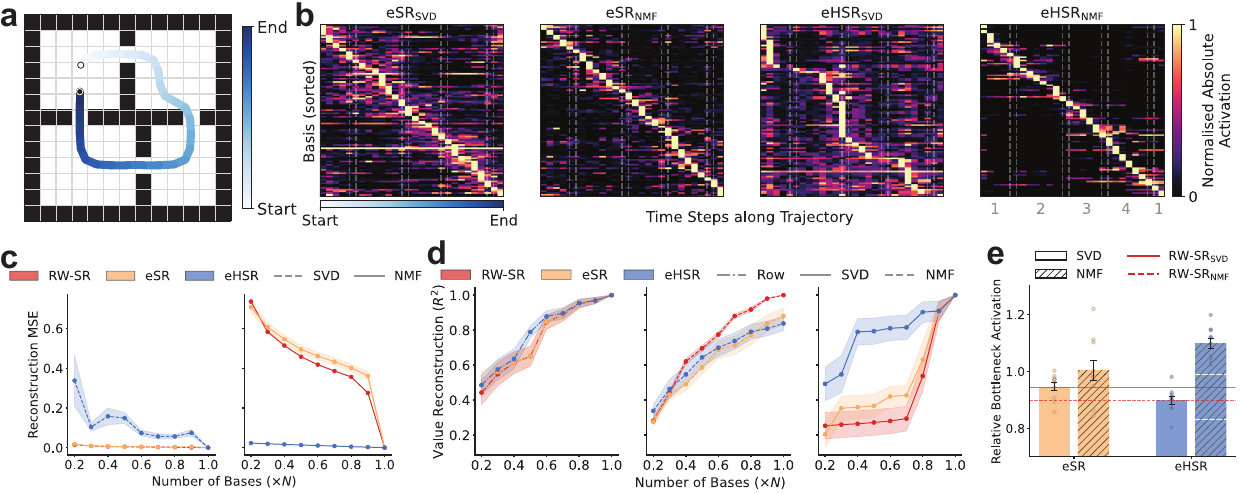}}
    \vspace{-5pt}
    \caption{
    \textbf{HSR-NMF basis yield a sparse, robust, and interpretable
    state representation.}
    \textbf{a.} Example trajectory in the four-room environment.
    \textbf{b.} Activation (normalised) of all basis at each timestep along the
    example trajectory for $\text{eSR}_{\text{SVD}}$, $\text{eSR}_{\text{NMF}}$,
    $\text{HSR}_{\text{SVD}}$, $\text{HSR}_{\text{NMF}}$ (from left to right).
    Gray numbers below the rightmost panel indicates which room the
    corresponding trajectory segment is in.
    \textbf{c.} Reconstruction mean-squared error of predictive representations
    given corresponding low-dimensional features, as functions of varying number
    of bases.
    \textbf{d.} Reconstruction $R^2$ score of optimal value functions with
    respect to randomly selected goal locations, given different state features
    with varying basis size.
    \textbf{e.} Relative bottleneck activation (mean activation at bottleneck
    states / mean activation at non-bottleneck states) of low-dimensional
    features of SR and HSR.
    }
    \label{fig: hsr_nmf_mechanisms_main}
  \end{center}
  \vspace{-25pt}
\end{figure*}

\begin{figure}[ht]
  \centering
  \includegraphics[width=\linewidth]{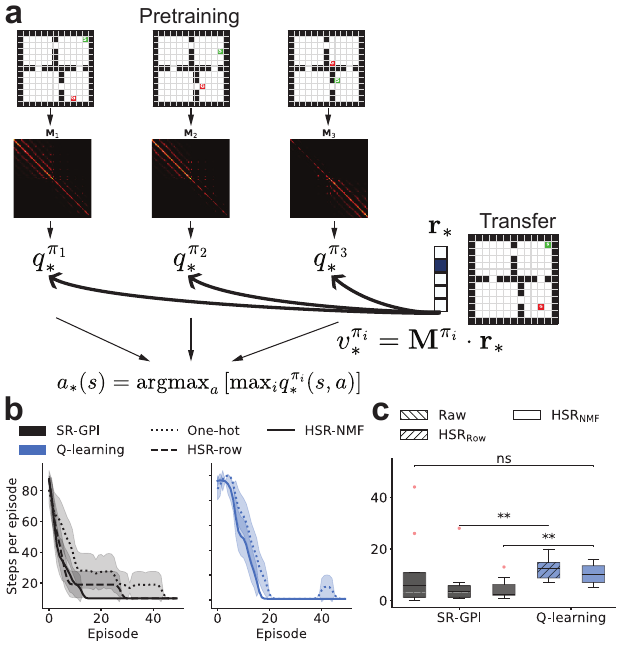}
  \vspace{-15pt}
  \caption{\textbf{HSR-NMF yields comparable transfer efficiency to SR-based
    Generalised Policy Improvement (GPI) algorithm.} 
    \textbf{a.} Schematic of SR-GPI algorithms~\cite{barreto2017successor}.
    \textbf{b.} Number of steps to reach the goal over the first $50$ training
    episodes (mean $\pm$ s.e.; 10 random seeds) of SR-GPI agent and linear
    Q-learning under one-hot and different HSR-based state representations. 
    \textbf{c.} Number of training episodes required to reach optimal
    performance for all agents in \textbf{b} (two-sided two-sample paired
    t-test; SR-GPI with one-hot state representations vs linear Q-learning with
    $\text{HSR}_{\text{NMF}}$ features: $p = 0.199$; SR-GPI with
    $\text{HSR}_{\text{Row}}$ features vs linear Q-learning with
    $\text{HSR}_{\text{Row}}$ features: $p = 0.003$; SR-GPI with
    $\text{HSR}_{\text{NMF}}$ features vs linear Q-learning with
    $\text{HSR}_{\text{NMF}}$ features: $p = 0.007$; $df=18$). }
  \label{fig: hsr_gpi}
  \vspace{-15pt}
\end{figure}

\subsection{HSR-NMF basis facilitates robust transfer.}
We next evaluate the robustness of low-dimensional features derived from the
predictive representations. To ensure fair comparison, we include an additional
baseline which constructs an ``expected SR'' (eSR) given the same set of
pre-training tasks used for constructing the expected HSR (Figure~\ref{fig:
four_room_motivation}a, bottom branch; see also Appendix~\ref{sec:
implementation_detail_transfer} and Algorithm~\ref{alg: sr_td_option}). We
hypothesised that while SVD is optimal for the smooth and diffusive dynamics of
standard SRs, the piecewise-smooth topology of HSR is better captured by NMF
decomposition. 

Conforming with our hypothesis, for RW-SR and eSR, SVD-based features
enables efficient transfer (given pre-training), whereas NMF fails to produce a
useful basis, resulting in poor performance (Figure~\ref{fig:
four_room_nmf_svd_main}a). We attribute the failure of NMF on SR features to
``feature collapse'' given the lack of intra-compartment variability
(Figure~\ref{fig: four_room_motivation}g). Crucially, the pattern is reversed
for HSR: HSR-NMF features not only proved superior to HSR-SVD, but also
outperformed all SR-based baselines, in terms of sample efficiency of transfer
(Figure~\ref{fig: four_room_nmf_svd_main}a, b). Notably, row features of HSR yield more rapid
learning than corresponding SVD features, whereas those of standard SRs
performed poorly. 

We hypothesised that the performance advantage of HSR-NMF stems from the
sparsity and stability of these bases. Indeed, through examining the temporal
trace of basis activation along a circular trajectory around the environment
(Figure~\ref{fig: hsr_nmf_mechanisms_main}a), we confirm that while spectral
decomposition and standard SR features exhibit entangled, noisy activations that
fluctuate rapidly, HSR-NMF reveals a sparse code that uniformly tiles the state
space (Figure~\ref{fig: hsr_nmf_mechanisms_main}b). The localised features
support efficient approximation of arbitrary value function over the state space
(Figure~\ref{fig: hsr_nmf_mechanisms_main}c), whilst retaining the
dynamics-aware predictive nature. In contrast, the globally supported,
oscillatory bases given spectral decomposition yield value learning difficult,
as increasing value estimates in one room likely cause simultaneous decrease in
estimated values in another room, a classical ``whack-a-mole'' issue. The
stability of the basis is underpinned by the intrinsic compressibility of the
HSR matrix. Crucially, we observe that HSR-NMF matches the reconstruction
efficiency of SR-SVD (Figure~\ref{fig: hsr_nmf_mechanisms_main}d), despite its
heavy-tailed spectral distribution (Figure~\ref{fig:
hsr_spectral_supplementary}a). Hence, despite the additional positiveness
penalty, the piecewise smooth structure of the HSR enables NMF to discover
faithful low-dimensional representation whilst achieving SVD-level compression
efficiency. The discovered latents additionally exhibit stronger
interpretability, with elevated activations at bottleneck states
(Figure~\ref{fig: hsr_nmf_mechanisms_main}e), the key states governing
successful navigation underlying arbitrary reward configurations in the state
space.

\begin{figure*}[ht]
  \begin{center}
    \centerline{\includegraphics[width=\linewidth]{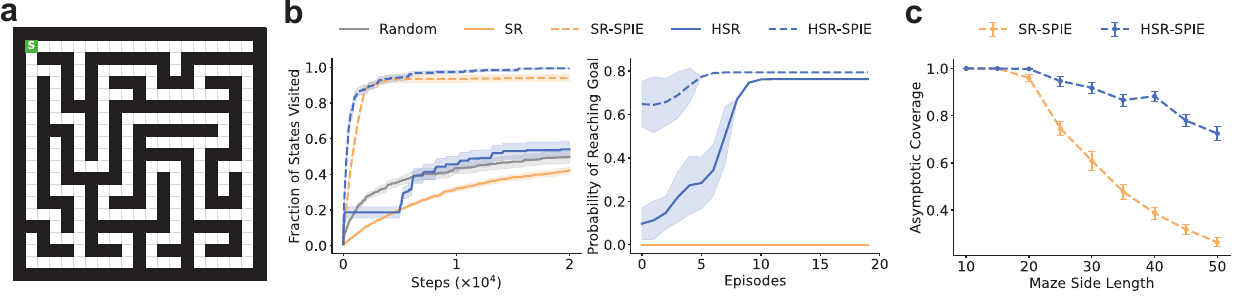}}
    \vspace{-5pt}
    \caption{
      \textbf{Hierarchical temporal abstraction enables scalable intrinsically
      motivated exploration.} 
      \textbf{a.} Exemplar procedurally generated random maze environment.
      \textbf{b.} Learning curves (mean $\pm$ s.e.) of different agents (see
      main text) in terms of pure exploration (in the absence of extrinsic
      reward; left) and goal-directed navigation (with only non-zero reward at
      randomly selected goal locations; right).
      \textbf{c.} Asymptotic state coverage (after $10^5$ interaction steps;
      mean $\pm$ s.e.) for SR-SPIE and HSR-SPIE agents, as a function of maze
      size. 
    }
    \label{fig: hsr_exploration_main}
  \end{center}
  \vspace{-25pt}
\end{figure*}

\vspace{-2pt}
\subsection{HSR and Generalised Policy Improvement}
\vspace{-2pt}
An related class of methods for addressing the transfer efficiency of RL agents
through the lens of representation learning are the Generalised Policy
Improvement (GPI) with SR (Figure~\ref{fig: hsr_gpi}a;
\citealt{barreto2017successor, barreto2018transfer, carvalho2023combining}).
SR-GPI algorithms leverage a similar two-stage process, solve a set of
pre-training tasks (Figure~\ref{fig: four_room_motivation}b) and maintains a
pool of pre-trained policies, each yielding policy-specific SR matrix, $\vec
M^{\pi_g}$. In the downstream transfer task with state-dependent reward vector
$\vec r_\ast$, each pretrained policy is re-evaluated given corresponding SRs
(Equation~\ref{eq: sr_def}), and action selection is then performed via GPI:
$a_\ast(s) = \argmax_a\left[\max_g q_\ast^{\pi_g}(s, a)\right]$. Importantly,
SR-GPI and HSR tackle transfer learning from orthogonal yet complementary
angles, focusing on algorithmic and representational advancements, respectively.

We empirically evaluate the transfer efficiency of SR-GPI, augmented with linear
function approximation given different state representations (Figure~\ref{fig:
hsr_gpi}b,c). We note that SR-GPI requires access to the ground-truth reward
function. To ensure fair comparison, $\vec r_\ast$ is not provided \textit{a
priori}, but needs to be learned online upon first encounter of the single goal
location (see Appendix~\ref{sec: gpi_experiments} for further details). Under
this setup, the canonical SR-GPI with one-hot state representation performed
comparably to the HSR agents that do not perform policy composition. Moreover,
combining SR-GPI with HSR state representations, either row or NMF features,
yields stronger transfer efficiency, hence supporting our argument that HSR and
GPI offer complementary benefits: policy-composition on the one hand, and
generalisable representation on the other.

\subsection{Scalable exploration with HSR.}
We finally investigate whether HSRs facilitate stronger RL beyond providing a
robust state representation. Recent works demonstrate the utility of standard
SRs in constructing effective intrinsic motivation for exploration in
sparse-reward environments~\citep{machado2020count, yu2023successor}. We
specifically consider two SR-based intrinsic rewards: row-norms as a proxy for
state visitation count~\cite{machado2020count}, and the successor-predecessor
intrinsic exploration (SPIE) that combines both temporally prospective and
retrospective information~\cite{yu2023successor}.
\begin{equation}
  r_{\text{SR}}(s) = \frac{1}{||\vec M_{s:}||_1}\,,\quad  r_{\text{SPIE}}(s) = \vec M_{ss'}^2 - ||\vec M_{:s'}||^2\,,
  \label{eq: intrinsic_rewards}
\end{equation}
We hypothesise that the temporally extended structure of HSR could support more
efficient exploration comparing to standard SRs. We hence implement SARSA-based
agents with these intrinsic rewards, computed given either the standard or
hierarchical SR. We test agents in procedurally generated random mazes of
varying sizes (Figure~\ref{fig: hsr_exploration_main}a). 

We firstly evaluate the exploration efficiency of implemented agents --
quantified through the rate at which state coverage increases with experience --
in pure exploration tasks (in the complete absence of extrinsic reward).
Building upon the established superiority of SPIE objectives over SR-norm
objectives~\cite{yu2023successor}, we observe that HSR-augmented objectives
significantly outperform their SR-based counterparts, covering a larger fraction
of the state space within a fixed budget (Figure~\ref{fig:
hsr_exploration_main}b, left). The gap in exploration efficiency becomes
increasingly pronounced as the environment size grows. Whilst the asymptotic
coverage of SR-SPIE degraded drastically in larger mazes, HSR-SPIE maintained
high state coverage (Figure~\ref{fig: hsr_exploration_main}c). In sparse-reward
navigation tasks (only non-zero reward for transitions into randomly selected
goal state), HSR-augmented intrinsic objectives achieved high probability of
discovering the goal location (Figure~\ref{fig: hsr_exploration_main}b, right).
Somewhat surprisingly, despite HSR-norm intrinsic reward yields slow state
coverage, it is nevertheless more effective than both SR-based objectives in
sparse-reward navigation tasks. These results suggest that by incorporating
extended temporal horizon into predictive representations, the HSR enables
agents to ``escape'' from local diffusive barriers, facilitating extended
exploration in environments with complex topology where single-step predictive
methods become inefficient.  

Finally, consistent with the observation that the HSR yields sustained
improvement in transfer efficiency in the absence of pre-constructed option set,
we confirmed empirically that intrinsic rewards facilitated by HSRs learned
given online-constructed options supports more efficient exploration than
SR-based intrinsic rewards (Figure~\ref{fig: hsr_exploration_online_option}).

\vspace{-5pt}

\section{Related Work}
\label{sec: related_works}
\vspace{-3pt}
\textbf{Generalisation in RL.} Recent efforts to improve the generalisability of
RL agents have been predominantly based on algorithmic advances. These
approaches largely focus on enforcing robustness via novel regularisation and
optimisation objectives~\cite{farebrother2018generalization,
cobbe2019quantifying, igl2019generalization}, or by artificially expanding the
diversity of training distributions through data mixing~\cite{wang2020improving}
and augmentation~\cite{laskin2020reinforcement, yarats2021image}. Moreover, an
influential line of works based on generalised policy improvement address
transfer learning through recycling and composing pre-trained
policies~\cite{barreto2017successor, barreto2018transfer,
carvalho2023combining}. Notably, the performance of GPI-based algorithms
critically hinges on the two-stage process involving pre-training on a set of
related tasks, whereas the HSR framework is robust with respect to the existence
of such pre-training phase (Figure~\ref{fig: four_room_row_features_main}).

Alternative to the algorithmic perspective, existing representation learning
methods have typically addressed generalisation through the lens of invariance
with respect to background noise and local perturbations, leveraging techniques
such as bisimulation metrics~\cite{zhang2020learning} and contrastive
learning~\cite{agarwal2021contrastive, laskin2020curl}. However, true multi-task
generalisation requires a task-equivariant representation that captures the
underlying structure of the environment, hence facilitating transfer across
varying task-induced reward configurations. Our HSR framework bridges this gap
by enforcing a temporally abstract, sparse geometry, whilst being robust to
policy shifts. Crucially, the parallel advancements in algorithmic and
representational methods are complementary, and their combination is expected to
result in stronger generalisation.


\textbf{Hierarchical intrinsic exploration.} Solving sparse-reward tasks
requires efficient exploration that extends beyond local dithering. While
existing intrinsic motivations based on prediction
error~\cite{pathak2017curiosity} and state visitation
counts~\cite{bellemare2016unifying, machado2020count} drive effective local
exploration, they often fail to traverse topological bottlenecks and cannot
induce sustained exploration in response to dynamic reward structure, especially
in larger environments~\cite{yu2023successor}. In their seminal work,
\citet{kulkarni2016hierarchical} addressed this by integrating hierarchical RL
with intrinsic motivation, demonstrating that agents reasoning at higher-level
of temporal abstraction could solve hard-exploration tasks where ``flat'' agents
failed. Our work shares this fundamental insight, such that temporal abstraction
is necessary for driving scalable and temporally extended exploration. However,
we approach the problem through the lens of representation learning rather than
explicit hierarchical control. The multi-scale map is built into the
construction of state features, subsequently enables exploration signals to
``jump'' across local diffusive barriers without requiring explicit subgoal
selection.
\vspace{-5pt}

\section{Discussion}
\label{sec: discussion}
\vspace{-3pt}
We introduce hierarchical successor representation, an extension of the
classical SR framework to incorporate temporal abstractions. The temporally
extended predictive representations yield robust state features with respect to
task-induced changes in reward structures and associated policies, hence
facilitating stronger transferability. Low-rank decomposition of HSR features
based on NMF further supports construction of sparse, interpretable state
representations, again leading to improved generalisability. HSR additionally
supports efficient and scalable intrinsically motivated exploration, extending
its advantages beyond the transfer regime. 

We have focused our empirical evaluations on discrete, tabular environments in
order to isolate the topological and representational properties of HSR under
controlled conditions. Extension to continuous MDPs will inevitably incur
deep-learning-based function approximation, and introduce complex confounding
factors, such as non-stationary feature learning~\cite{lyle2019comparative} and
parameter interference~\cite{bengio2013representation}. These confound can
obscure the specific representational characteristics we aim to formalise.
Having established their structural advantages confirmed under tractable
settings in the current work, a natural future direction is to integrate HSR
into deep RL systems, where its stable, policy-agnostic properties could serve
as a robust objective for representation learning in high-dimensional and
continuous domains~\cite{jaderberg2016reinforcement, machado2017laplacian}.

Here we have primarily considered settings in which options are acquired through
pre-exposure to the environment. Consequently, the strongest transfer results
presented here concern tasks with shared transition dynamics. However, HSR also
remains effective when no option set is available a priori, using online option
construction. The procedure used here is deliberately simple and heuristic;
developing principled algorithms for iterative option discovery, and
characterising their theoretical consequences, remains an important direction
for future work. This will be particularly important in non-stationary
environments, where agents must revise their option set as transition structure
changes, allowing the state representation to adapt online while preserving the
generalisation benefits of hierarchical predictive structure.

So far we have interpreted NMF basis of the HSR primarily through the lens of
low-rank decomposition, emphasising data compression. However, a distinct
advantage of NMF is its capability to extract an \textit{over-complete} basis,
which has promising implications in how the neural system performs efficient
coding~\cite{olshausen1997sparse, lewicki2000learning}, particularly when
combined with the predictive map theory of hippocampal neuronal firing through
the lens of successor representation~\cite{stachenfeld2017hippocampus,
yu2020prediction, piray2021linear}. Exploring HSR-NMF in the over-complete
regime presents a promising avenue for scaling to environments with repetitive
or aliased substructures, where spectral bases often fail to discriminate
distinct states.

\clearpage 
\section*{Acknowledgement}
We thank Dominik Straub and anonymous reviewers for helpful discussions and
comments. This work was supported by a Wellcome Trust Investigator Award in
Science (212262/Z/18/Z; M.L.) and the Simons Collaboration on Ecological
Neuroscience (M.L.). Authors declare no conflict of interest.

\section*{Impact Statement}
This paper presents work aiming to advance the field of reinforcement learning
and representation learning. There are many potential societal consequences of
our work, none of which we feel must be specifically highlighted here.

\bibliography{example_paper}
\bibliographystyle{icml2026}

\newpage
\appendix
\onecolumn

\setcounter{figure}{0}
\renewcommand{\thefigure}{S\arabic{figure}}
\setcounter{algorithm}{0}
\renewcommand{\thealgorithm}{S\arabic{algorithm}}
\setcounter{equation}{0}
\renewcommand{\theequation}{S\arabic{equation}}

\section{Derivations and Proofs}

\subsection{Derivation of HSR.}
\label{sec: hsr_derivations}
As a reminder, the HSR is defined as the expected discounted future occupancy
under some high-level policy, $\mu:
\mathcal{S}\times\bar{\mathcal{A}}\rightarrow[0, 1]$, where $\bar{\mathcal{A}} =
\mathcal{A}\cup\Omega$ denotes the augmented action space.


\begin{equation}
  \begin{split}
    \mathcal{M}^\mu_{ss'} &= \mathbb{E}_\mu\left[
      \sum_{t=0}^{\infty}\gamma^t\mathds{1}(s_t, s')|s_0=s
    \right]\\
    &= \sum_{\bar{a}\in\bar{\mathcal{A}}}\mu(\bar{a}|s)\mathbb{E}_{\bar{a}}\left[
        \sum_{t=0}^{\tau_{s\bar{a}}-1}\gamma^t\mathds{1}(s_t, s') + 
        \gamma^{\tau_{s\bar{a}}}\sum_{k=0}^{\infty}\gamma^k\mathds{1}(s_{\tau_{s\bar{a}}+k}, s')|s_0=s, \bar{a}
      \right]\\
    &= \sum_{\bar{a}\in\bar{\mathcal{A}}}\mu(\bar{a}|s)\underbrace{\mathbb{E}_{\bar{a}}\left[
        \sum_{t=0}^{\tau_{s\bar{a}}-1}\gamma^t\mathds{1}(s_t, s')|s_0=s, \bar{a}
      \right]}_{\vec M^{\bar{a}}_{ss'}} + \sum_{\bar{a}\in\bar{\mathcal{A}}}\mu(\bar{a}|s)\sum_{\tilde{s}\in\mathcal{S}}\underbrace{\mathbb{E}_{\bar{a}}\left[
        \gamma^{\tau_{s\bar{a}}}\mathds{1}(s_{\tau_{s\bar{a}}}, \tilde{s}) | s_0=s, \bar{a}
      \right]}_{\vec F^{\bar{a}}_{s\tilde{s}}}\mathcal{M}^\mu_{\tilde{s}s'}\\
    &= \mathcal{B}^\mu_{ss'} + \sum_{\tilde{s}\in\mathcal{S}}\mathcal{G}^\mu_{s\tilde{s}}\mathcal{M}^\mu_{\tilde{s}s'}\,,
  \end{split}
  \label{eq: hsr_def_appendix}
\end{equation}

where we conditioned on the termination state $\tilde{s}$ in the second term of
the penultimate line, then leverage the definition of the \textit{expected
intra-option SR} and \textit{hierarchical continuation kernel}
(Equation~\ref{eq: hsr_bellman}) as following.

\begin{equation}
  \mathcal{B}^\mu_{ss'} = \sum_{\bar{a}\in\bar{\mathcal{A}}}\mu(\bar{a}|s)\vec M^{\bar{a}}_{ss'}\,,\quad 
  \mathcal{G}^\mu_{s\tilde{s}} = \sum_{\bar{a}\in\bar{\mathcal{A}}}\mu(\bar{a}|s)\vec F^{\bar{a}}_{s\tilde{s}}\,,
\end{equation}

where we define $\vec M^{\bar{a}}_{ss'} =
\mathbb{E}_{\pi_{\bar{a}}}\left[\sum_{t=0}^{\tau_{s\bar{a}}-1}\gamma^t\mathds{1}(s_t,
s')|s_0=s\right]$ as the intra-option SR. 

We introduce the \textit{discounted termination kernel}, $\vec{F}^{\bar{a}}$,
defined as following.
\begin{equation}
  \begin{split}
    \vec{F}^{\bar{a}}_{ss'} &= \sum_{t=0}^\infty\gamma^t\mathbb{P}(\bar{a}\text{ terminate at } s' \text{ at time }t | s_0=s)\\
    &= \left(\sum_{t=0}^{\infty}\gamma^t\mathbb{P}^{\bar{a}}(s_{t}=s'|s_0=s)\right)\cdot \beta_{\bar{a} }(s')\\
    &= \left[\vec{M}^{\bar{a}}\cdot\text{diag}(\vec \beta_{\bar{a}})\right]_{ss'}\,,
  \end{split}
\end{equation}

Hence, the second term in the recursive definition (Equation~\ref{eq:
hsr_def_appendix}) can be re-expressed as following.
\begin{equation}
  \mathbb{E}_{\mu, \bar{a}}\left[\gamma^{\tau_{s\bar{a}}}\mathcal{M}^\mu_{s_{\tau_{s\bar{a}}}s'} \right] = 
  \sum_{\tilde{s}\in\mathcal{S}}\mathbb{E}_{\bar{a}}\left[\gamma^{\tau_{s\bar{a}}}\mathds{1}(s_{\tau_{s\bar{a}}}, \tilde{s})|s_0=s\right]\mathcal{M}^\mu_{\tilde{s}s'} = 
  \sum_{\tilde{s}\in\mathcal{S}}\vec F^{\bar{a}}_{s\tilde{s}}\mathcal{M}^\mu_{\tilde{s}s'}\,,
\end{equation}

We hence have derived the Bellman recursion for HSR.

\begin{equation}
  \begin{split}
    &\mathcal{M}_{ss^{\prime}}^{\mu} = \underbrace{\sum_{\bar{a}\in\bar{\mathcal{A}}} \mu(\bar{a}|s) \vec M_{ss^{\prime}}^{\bar{a}}}_{\mathcal{B}^\mu_{ss^{\prime}}} + 
    \sum_{\tilde{s}\in\mathcal{S}} \underbrace{\left( \sum_{\bar{a}\in\bar{\mathcal{A}}} \mu(\bar{a}|s) \vec F_{s\tilde{s}}^{\bar{a}} \right)}_{\mathcal{G}^\mu_{s\tilde{s}}} \mathcal{M}_{\tilde{s}s^{\prime}}^{\mu}\,,\\
    &\mathcal{T}^\mu\mathcal{M} = \mathcal{B}^\mu + \mathcal{G}^\mu\mathcal{M}\,,
  \end{split}
\end{equation}

\subsection{Proof of Theorem~\ref{theorem: hsr_contraction}}
\label{sec: bellman_contraction_proof}
We here provide the proof that the HSR Bellman operator is a contraction mapping
under max-norm.

\begin{proof}
Consider two arbitrary matrices $\vec{M}, \vec{M}'$. From Equation~\ref{eq:
hsr_bellman}, the element-wise difference is:
\begin{equation}
    |(\mathcal{T}^{\mu}\vec{M})_{ss'} - (\mathcal{T}^{\mu}\vec{M}')_{ss'}| = 
    \left| \sum_{\tilde{s}\in\mathcal{S}} \mathcal{G}_{s\tilde{s}}^{\mu} (\vec M_{\tilde{s}s'} - \vec M'_{\tilde{s}s'}) \right| 
    \leq \sum_{\tilde{s}\in\mathcal{S}} \mathcal{G}_{s\tilde{s}}^{\mu} || \vec{M} - \vec{M}' ||_{\infty}
    \leq \left(\sup_s\sum_{\tilde{s}\in\mathcal{S}} \mathcal{G}_{s\tilde{s}}^{\mu}\right) || \vec{M} - \vec{M}' ||_{\infty}
\end{equation}
The supremum over sum term can be re-expressed as
$\left(\sup_s\sum_{\tilde{s}\in\mathcal{S}}
\mathcal{G}_{s\tilde{s}}^{\mu}\right) =
\sum_{\bar{a}}\mu(\bar{a}|s)\gamma^{\tau_{s\bar{a}}} \leq \gamma$, since
$\tau_{s\bar{a}} \geq 1$. Thus, the operator contracts the error by at least
$\gamma$ at each step.
\end{proof}

\section{Further algorithmic details of the HSR framework}

\subsection{Pseudocode for constructing HSR-NMF}

\begin{algorithm}[h]
   \caption{Computing NMF basis of expected HSR (HSR-NMF) given eigenoptions.}
\begin{algorithmic}[1]
   \STATE {\bfseries Input:} MDP $\mathcal{M} = \langle \mathcal{S},
   \mathcal{A}, \mathcal{P}, \mathcal{R}, \gamma \rangle$, set of pre-training
   tasks $G = \{g_1, \dots, g_L\}$, number of options $K$.
   
   \STATE {\bfseries Output:} Basis $\vec\varphi \in \mathbb{R}^{N \times P}$.
   
   \STATE \textbf{// Stage 1: Eigenoption discovery.}
   \STATE Compute RW-SR: $\vec M_{0} \leftarrow (\vec I - \gamma \vec P_{\text{rw}})^{-1}$.
   \STATE Compute Eigenvectors: $\vec U_0, \Sigma_0, \vec V_0 \leftarrow \text{SVD}(\vec M_0)$.
   \STATE Construct Options set $\Omega \leftarrow \emptyset$.
   \STATE Permute column space of $\vec V_0$ given decreasing singular values.
   \FOR{$k=1$ {\bfseries to} $K$}
   \STATE Define pseudo-reward $r_k(s, s') \leftarrow \vec v_k(s') - \vec
   v_k(s)$. 
   \STATE Learn option-specific policy $\pi_k$, initial set, $\mathcal{I}_k$,
   and termination $\beta_k$ by maximising $r_k$ (e.g., with Q-learning).
   \STATE $\Omega \leftarrow \Omega \cup \{( \mathcal{I}_k, \pi_k, \beta_k )\}$.
   \ENDFOR

   \STATE \textbf{// Stage 2: Offline construction of expected HSR.}
   \STATE Initialize Expected HSR $\bar{\mathcal{M}} \leftarrow \mathbf{0}_{N \times N}$.
   \FOR{each task $g \in G$}
   \STATE Learn high-level policy $\mu_g: \mathcal{S} \to \Omega$ for task $g$.
   \STATE Compute \textit{intra-option SR}: $\mathcal{B}_g \leftarrow \sum_{\bar{a}}
   \mu_{g}(\bar{a}|s) \vec M^{\bar{a}}$.
   \STATE Compute \textit{continuation kernel}: $\mathcal{G}_g \leftarrow
   \sum_{\bar{a}} \mu_g(\bar{a}|s) \vec M^{\bar{a}} \text{diag}(\vec \beta_{\bar{a}})$.
   \STATE Solve HSR for task $g$: $\mathcal{M}_g \leftarrow (\vec I -
   \mathcal{G}_g)^{-1} \mathcal{B}_g$.
   \STATE Update Expected HSR: $\bar{\mathcal{M}} \leftarrow \bar{\mathcal{M}} +
   \frac{1}{M} \mathcal{L}_g$.
   \ENDFOR

   \STATE \textbf{// Stage 3: Basis Discovery via NMF}
   \STATE Solve $\min_{\Phi, H} || \bar{\mathcal{M}} - \vec\varphi \vec H
   ||_F^2$ subject to $\vec\varphi, \vec H \geq 0$.
   \STATE \textbf{return} Basis features $\vec\varphi$.
\end{algorithmic}
\label{alg: hsr_nmf}
\end{algorithm}

\subsection{HSR with online-constructed options}
\label{sec: hsr_online_option}
To incorporate online option-construction into the HSR framework, agents are
initialised with empty option sets and identity SR matrix. To construct
eigenoptions, the SR matrix is updated online given primitive actions
(Equation~\ref{eq: sr_td}). Upon reaching a pre-specified number of warm-up
steps, agents compute the set of eigenoptions given eigenvectors of the current
SR matrix (Equation~\ref{eq: eigenoption_reward}), which augments the action
space of the agent and the associated HSR matrix can be learned hereafter
(Equation~\ref{eq: hsr_td}). The option set is updated periodically instead of
over each timestep to stabilise the representation learning process. 

We note that the currently proposed online procedure is largely heuristic, and
we leave the study of alternative procedures and associated theoretical analysis
for future works.

\begin{algorithm}[h]
   \caption{Learning HSR given online-constructed option set.}
\begin{algorithmic}[1]
   \STATE {\bfseries Input:} MDP $\mathcal{M} = \langle \mathcal{S},
   \mathcal{A}, \mathcal{P}, \mathcal{R}, \gamma \rangle$, number of options
   $K$, number of warm-up steps, $T_0$, number of steps for periodically updating
   the option set and associated HSR, $T_{\text{update}}$, maximum number of
   timesteps, $T_{\text{max}}$.
   \STATE {\bfseries Output:} HSR matrix $\mathcal{M} \in
   \mathbb{R}^{N\times N}$.
   \STATE \textbf{Initialisation.} Initialise option set, $\Omega \leftarrow
   \emptyset$; primitive-action SR matrix, $\vec M \leftarrow \vec I_{N\times
   N}$, HSR matrix, $\mathcal{M} \leftarrow \vec I_{N\times N}$, policy $\pi:
   \mathcal{S}\times \mathcal{A}\rightarrow [0, 1]$.
   \FOR{$t=1$ {\bfseries to} $T_0$}
   \STATE Sample $a_t\sim \pi(\cdot|s)$, interact with the environment to retrieve
   the transition tuple $(s_t, a_t, r_t, s_{t+1})$.
   \STATE Update $\vec M$ given the transition tuple.
   \STATE Update policy with standard RL algorithms (e.g., Q-learning, policy
   gradient, etc.).
   \ENDFOR
   \STATE Construct $\Omega$ given the current SR matrix (see lines 6-12 in
   Algorithm~\ref{alg: hsr_nmf}).
   \STATE Augment the action space, $\bar{\mathcal{A}} = \mathcal{A} \cup
   \Omega$.
   \WHILE{$t < T_{\text{max}}$}
   \STATE Sample $\bar{a}_t\sim\pi(\cdot|s)$, interact with the environment to
   retrieve the sequence of transition tuples $\left\{s_i, a_i, r_i,
   s_{i+1}\right\}_{i=t}^{t+\tau_{\bar{a}}}$. 
   \STATE Update HSR (Equation~\ref{eq: hsr_td}) and SR (Equation~\ref{eq:
   sr_td}) given the transition tuple sequence. 
   \IF{$(t - T_0) \mod T_{\text{update}} = 0$}
   \STATE Update $\Omega$ given the current SR matrix.
   \ENDIF
   \STATE $t \leftarrow t + \tau_{\bar{a}}$.
   \ENDWHILE
   \STATE \textbf{return} HSR $\mathcal{M}$.
\end{algorithmic}
\label{alg: hsr_online_option}
\end{algorithm}

\section{Implementation Detail}
\label{sec: implementation_detail}

Python codes for empirical evaluations included in the main paper can be found
at \url{https://github.com/changmin-yu/HSR_icml_2026}.

\begin{algorithm}[ht]
   \caption{Online learning of SR matrix given option-augmented action space.}
\begin{algorithmic}[1]
   \STATE {\bfseries Input:} MDP $\mathcal{M} = \langle \mathcal{S},
   \mathcal{A}, \mathcal{P}, \mathcal{R}, \gamma \rangle$, set of options
   $\Omega = \{(\mathcal{I}_k, \pi_k, \beta_k)\}_{k=1}^{K}$, exploration
   baseline $\epsilon$, time horizon $T_{\text{max}}$.
   
   \STATE {\bfseries Output:} SR matrix $\vec M \in \mathbb{R}^{N \times N}$.
   \STATE Initialise SR matrix, $\vec M = \vec 0_{N\times N}$, augmented action
   value function $Q(s, \bar{a}) = 0$ for all $(s,
   \bar{a})\in\mathcal{S}\times\bar{\mathcal{A}}$, initial state $s_0$.
   \WHILE{$t < T_{\text{max}}$ \AND $Q$ not converged}
   \STATE Selected action $\bar{a}_t$ via $\epsilon$-greedy given $Q$.
   \STATE Execute $\bar{a}_t$ until termination, yielding trajectory $\{s_{t},
   a_{t}, s_{t+1}, \dots, a_{t+\tau_{s_t\bar{a}_t}},
   s_{t+\tau_{s_t\bar{a}_t}+1}\}$.
   \FOR{$\tau=0$ {\bfseries to} $\tau_{s_t\bar{a}_t}$}
   \STATE Given single-step transition tuple $(s_{t+\tau}, a_{t+\tau},
   r_{t+\tau}, s_{t+\tau+1})$
   \STATE Update $\vec M_{s_{t+\tau}:}$ via TD-learning (Equation~\ref{eq:
   sr_td}).
   \STATE Update Q-function with single-step transition tuple, $$Q(s_{t+\tau},
   a_{t+\tau}) \leftarrow Q(s_{t+\tau}, a_{t+\tau}) +
   \alpha\left(r_{t+\tau}+\gamma \max_{\bar{a}\in\bar{\mathcal{A}}}Q(s_{t+\tau+1}, \bar{a})-Q(s_{t+\tau}, a_{t+\tau})\right)\,,$$
   \ENDFOR
   \STATE Increment step counter $t \leftarrow t + \tau_{s_t\bar{a}_t}$.
   \ENDWHILE
   \STATE \textbf{return} SR matrix $\vec M$.
\end{algorithmic}
\label{alg: sr_td_option}
\end{algorithm}

\subsection{Transfer learning}
\label{sec: implementation_detail_transfer}
All implemented agents were implemented based on Q-learning with linear function
approximation. In experiments with simultaneous online learning of value
function and state representation (Figure~\ref{fig:
four_room_row_features_main}), all agents have access to the same set of $8$
eigenoptions (Figure~\ref{fig: hsr_basis_supplementary}a). The same set of
hyperparameters is used for training the option-specific optimal policies,
online/offline construction of SR/HSR, and for learning the optimal high-level
policy in downstream tasks.
\begin{itemize}
  \item Discounting factor: $\gamma = 0.9$;
  \item Learning rate: $\alpha = 0.01$; 
  \item Exploration baseline ($\epsilon$-greedy): $\epsilon = 0.1$;
  \item Number of SVD/NMF basis used: $K = N = 104$;
\end{itemize}

To properly assess the transferability of agents (Figure~\ref{fig:
four_room_row_features_main}, Figure~\ref{fig: hsr_eigenoptions_fully_trained}),
all agents are immediately transferred to learning in the new task upon
goal-switching, without re-initialisation of the linear weighting in function
approximation established over the $G_1$ task. For each task, all agents are
trained over $50$ episodes, with a finite horizon of $5000$ timesteps within
each episode. 

Given the augmented action space, online learning of HSR follows the derived
TD-learning rule (Equation~\ref{eq: hsr_td}). We update the SR matrix also using
option-induced trajectories, but with intra-option step-wise update
(Algorithm~\ref{alg: sr_td_option}).

Sample efficiency of learning (and of transfer) is quantified through the number
of training episodes before (exponential moving average of, with step size
$0.1$) episode-specific steps to reach the goal falls below $1.5\times$ optimal
performance, and maximum number of episodes ($50$) if not converged. 

For transfer learning experiments with offline-constructed predictive features
(Figure~\ref{fig: four_room_nmf_svd_main}, \ref{fig: hsr_nmf_mechanisms_main}),
all agents are trained over $200$ episodes, with a finite horizon of $1000$
timesteps within each episode.

\subsection{HSR and Generalised Policy Improvement}
\label{sec: gpi_experiments}

The SR-GPI algorithms, by design, require knowledge of the ground-truth reward
vector for downstream tasks before any interaction with them. To ensure
maximally fair comparison with the HSR agents, SR-GPI agents are initialised
with linear Q-learning, and switch to the GPI policy (given max-composition of
pretrained policies) once the state-dependent reward vector is constructed. Note
that for simplicity, we assume that there is always one and only one goal
location with non-zero reward, hence SR-GPI agents switch to GPI policies upon
first encounter of the designated goal location.

\subsection{Intrinsically motivated exploration}
\label{sec: exploration_supplementary}
We evaluate SR/HSR-based intrinsic exploration bonus on procedurally generated
$W\times L$ grid-world environments (Figure~\ref{fig: hsr_exploration_main}a).
To ensure topological diversity while ensuring existence of valid solutions, we
implement a custom maze generator based on the recursive backtracking algorithm
(following existing implementations from the \textit{Minigrid} library;
\citealt{chevalier2023minigrid}). The generation process initiates with a grid
fully occluded by walls. A randomised depth-first search algorithm is performed
on the odd-numbered grid-coordinates: starting from a seed-specific randomly
selected cell, the algorithm recursively visits unvisited neighbours, carving
the intervening wall to create passages. 

The reward agents observe at each timestep is a linear summation of extrinsic
and intrinsic rewards.
\begin{equation}
  \tilde{r}(s_t) = \mathcal{R}(s_t) + \lambda r_{\text{intrinsic}}(s_t, s_{t+1})\,,
\end{equation}
where we use SR/HSR-based intrinsic rewards defined in Equation~\ref{eq:
intrinsic_rewards}~\cite{machado2020count, yu2023successor}. 

The scaling factor, $\lambda$ is set to $1$ for all agents in the pure
exploration task (where extrinsic rewards are always $0$; Figure~\ref{fig:
hsr_exploration_main}b, left; Figure~\ref{fig: hsr_exploration_main}c), and to
$0.01$ in the sparse-reward goal-oriented navigation setting (extrinsic reward
equals $1$ for all transitions into the goal state, and $0$ otherwise;
Figure~\ref{fig: hsr_exploration_main}b, right). All agents were implemented as
canonical SARSA agents~\cite{sutton1998reinforcement}, with $\gamma = 0.99$ and
$\alpha = 0.01$, for both types of tasks.

\clearpage
\begin{figure}[ht]
  \begin{center}
    \centerline{\includegraphics[width=\linewidth]{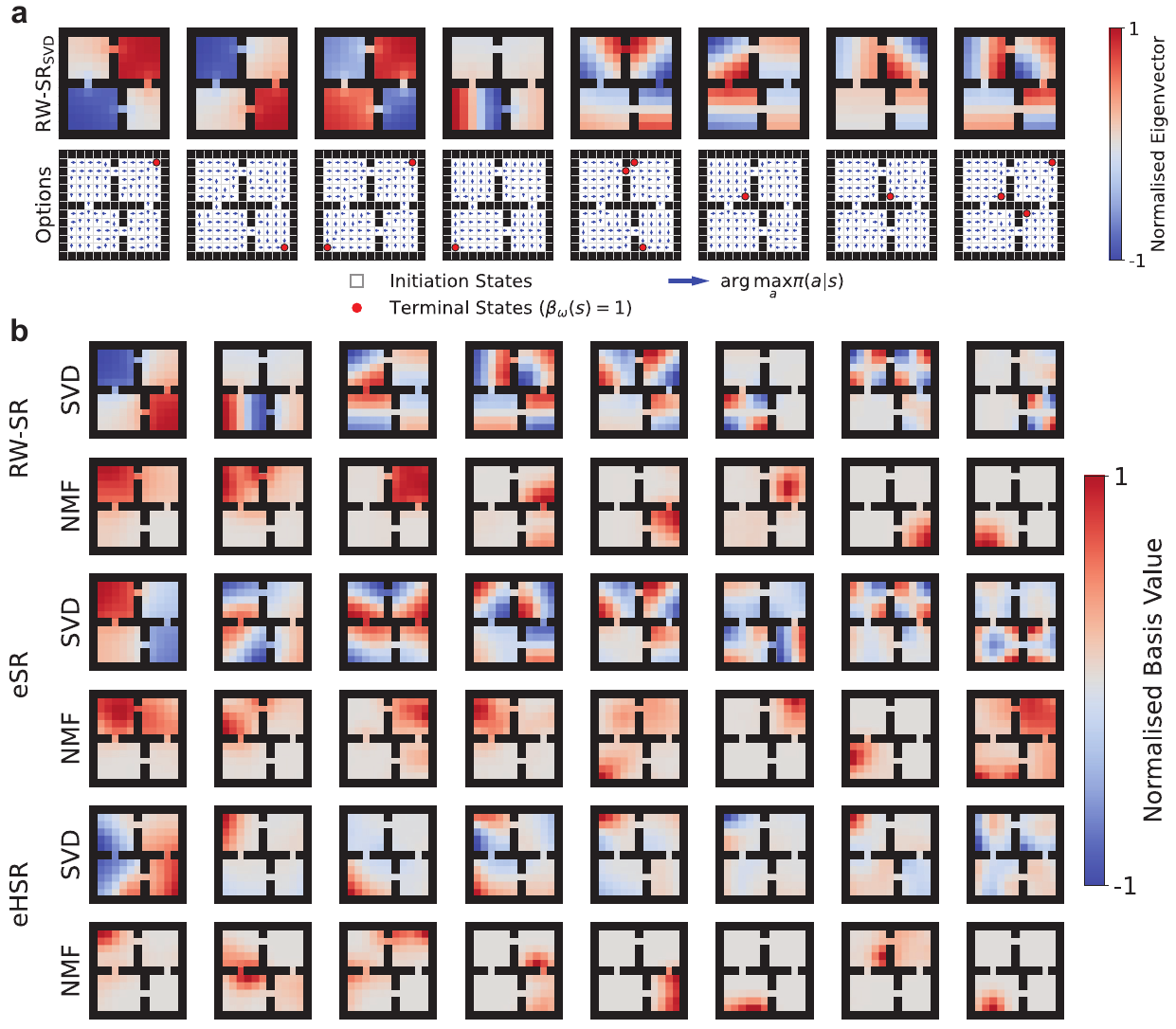}}
    \caption{
      \textbf{Further empirical results on HSR-NMF.} 
      \textbf{a.} The full set of ($8$) eigenoptions used in main experiments,
      both for the option-based RL agents (Figure~\ref{fig:
      four_room_row_features_main}) and for offline construction of expected
      SR/HSR (Figure~\ref{fig: four_room_nmf_svd_main}, \ref{fig:
      hsr_nmf_mechanisms_main}). 
      \textbf{b.} Exemplar features given different decompositions (SVD and NMF)
      of RW-SR, expected SR, and expected HSR. Notably, we observe sparse
      HSR-NMF basis with over-representation of bottleneck states (bottom row).
    }
    \label{fig: hsr_basis_supplementary}
  \end{center}
\end{figure}

\clearpage
\begin{figure}[!t]
  \begin{center}
    \centerline{\includegraphics[width=\linewidth]{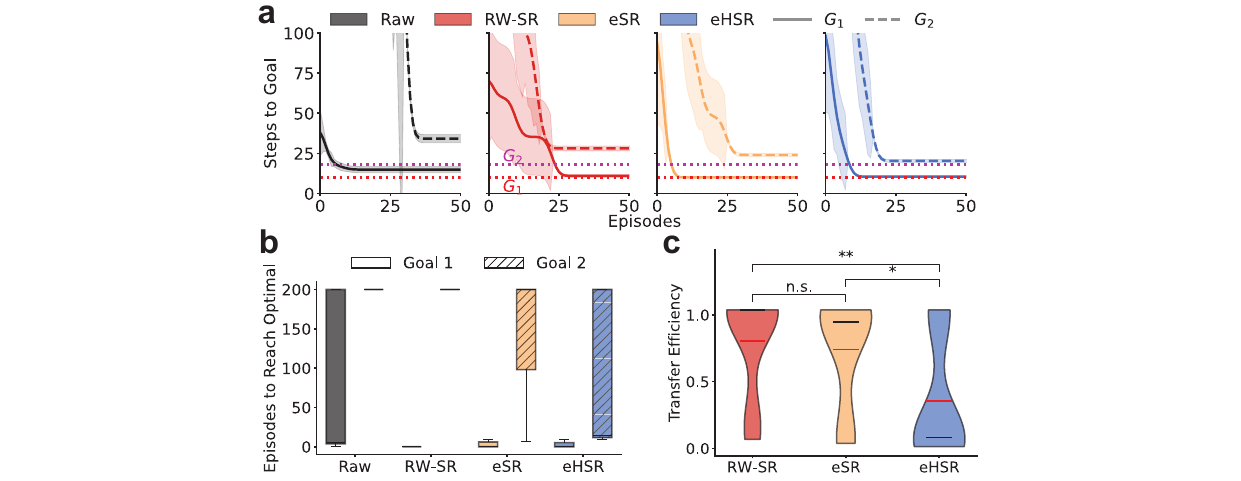}}
    \caption{
      \textbf{Offline-constructed HSR facilitates more sample-efficient transfer in eigenoption-augmented agents.} 
      \textbf{a.} Training curves (number of steps before reaching the goal;
      mean $\pm$ s.e.) of Q-learning agents with linear function approximation
      given row features of different offline-constructed predictive
      representations. All agents were first trained to reach $G_1$, then
      immediately transfered to learn to reach $G_2$.
      \textbf{b.} Number of training episodes to reach optimal performance for
      all agents in \textbf{a}.
      \textbf{c.} Transfer efficiency between $G_1$ and $G_2$ tasks for all agents.
    }
    \label{fig: hsr_eigenoptions_fully_trained}
  \end{center}
\end{figure}

\clearpage
\begin{figure}[!t]
  \begin{center}
    \centerline{\includegraphics[width=\linewidth]{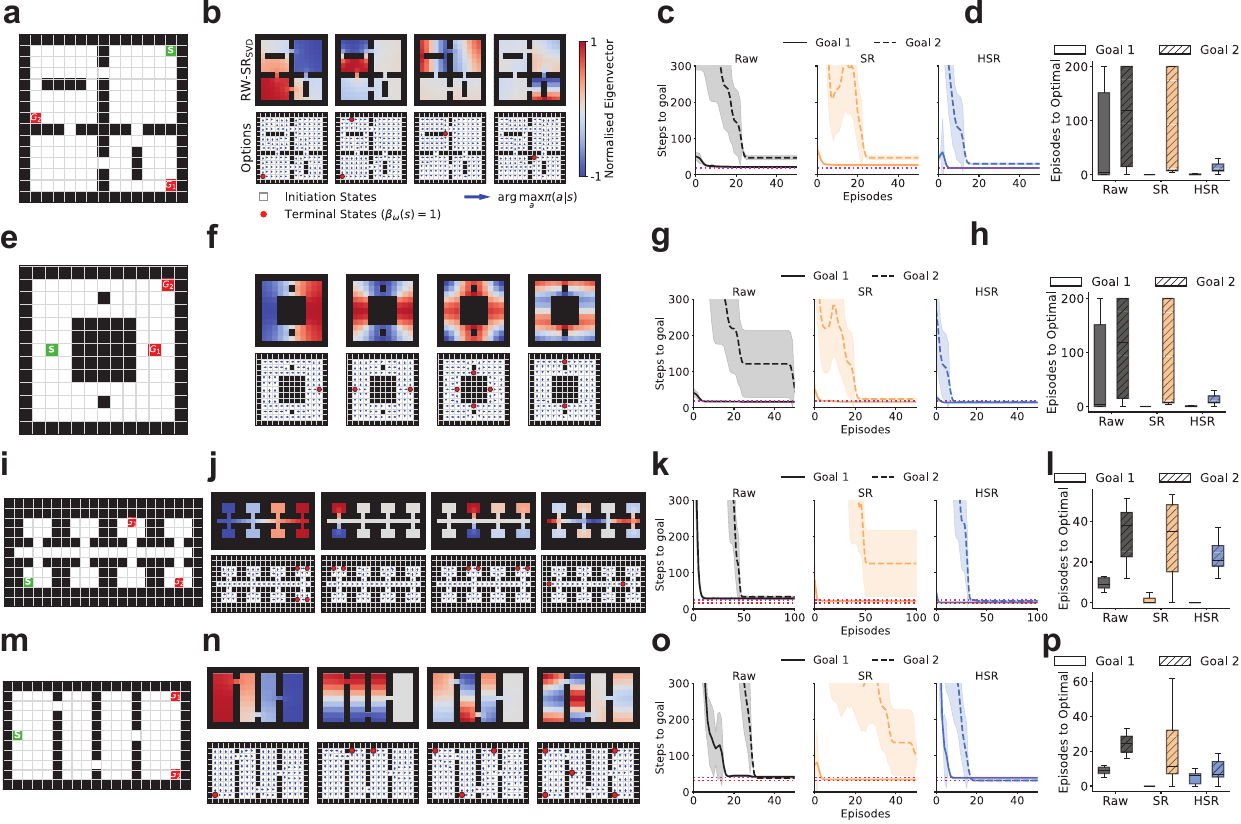}}
    \caption{
      \textbf{HSR transfer-gain is persistent across multiple
      grid-world topologies beyond the canonical four-room environment.}
      \textbf{a.} Graphical demonstration of the augmented four-room environment
      with inserted barriers (breaking the intra-room symmetry), and location of
      the start (S) and the goal locations for the source ($G_1$) and transfer
      ($G_2$) tasks. \textbf{b.} Eigenvectors and option-specific policies for
      selected eigenoptions in the augmented four-room environment (cf. Figure 1d
      in the main text) with increasing spatial frequencies (decreasing
      eigenvalues). \textbf{c.} Training curves (number of primitive actions to
      reach the goal location as a function of number of training episodes; mean
      $\pm$ s.e.; 10 random seeds) for Q-learning agents with linear function
      approximation, given different state representations (left: one-hot
      representation; middle: rows of SR matrix; right: rows of HSR matrix). Note
      that SR- and HSR-based state representations are also trained online (cf.
      Figure 2b in the main text). Dashed horizontal red and magenta lines
      indicate the optimal number of primitive actions to reach $G_1$ and $G_2$
      from $S$, respectively. \textbf{d.} Number of training episodes to reach
      optimal performance in the $G_1$ and $G_2$ tasks for all agents.
      \textbf{e-h.} Same as panels \textbf{a-d}, but for the ``Loop'' maze.
      \textbf{i-l.} Same as panels \textbf{a-d}, but for the ``Corridor'' maze.
      \textbf{m-p.} Same as panels \textbf{a-d}, but for the
      ``Multi-pairwise-bottleneck'' maze. Across all tested grid-world
      environments, HSR retains the strongest transfer performance.
    }
    \label{fig: multi_mdp_transfer}
  \end{center}
\end{figure}

\clearpage
\begin{figure}[!t]
  \begin{center}
    \centerline{\includegraphics[width=\linewidth]{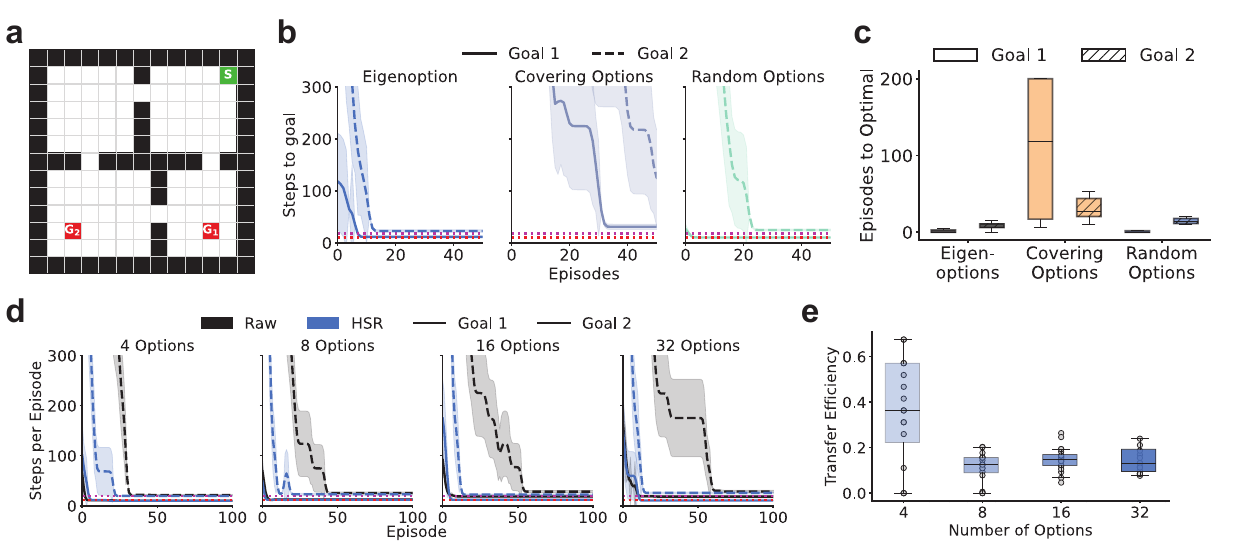}}
    \caption{ \textbf{HSR transfer-gain remains robust across option classes and
    largely stabilises beyond a modest option budget.} \textbf{a.} Graphical
    demonstration of the transfer learning problem setup in the four-room
    environment (cf. Figure 2a in the main text). \textbf{b.} Training curves
    (number of primitive actions to reach the goal location as a function of
    number of training episodes; mean $\pm$ s.e.; 10 random seeds) for
    Q-learning agents with linear function approximation, given online-updated
    HSR state representations. In addition to eigenoptions (left), we also
    implement agents with covering options (middle;
    \citealt{jinnai2019discovering}) and randomly generated target-oriented
    options (given randomly sampled option-specific target locations; right).
    HSR remains effective across alternative option classes, with eigenoptions
    performing best overall, random options slightly worse, and covering options
    substantially weaker in this transfer setting. Covering options performed
    poorly in our transfer setting because they are sparse-reward point options
    with singleton initiation sets, and are therefore only available in very
    limited parts of the state space. This makes them much harder to exploit as
    reusable transfer primitives than eigenoptions, which are broadly available
    and learned from shaped pseudo-rewards. This interpretation is consistent
    with prior analysis showing that covering options are rarely sampled online
    and can suffer exploration difficulties during policy learning (see also
    Machado et al., 2023). \textbf{c.} Number of training episodes requires to
    reach optimal performance in the source and target tasks for HSR agents
    given the three option classes. \textbf{d.} Learning curves for Raw (one-hot
    state representation) and HSR agents given different number of eigenoptions
    (4, 8, 16, or 32). Increasing the number of options enlarges the augmented
    action space and markedly impairs the performance of the Raw baseline,
    whereas HSR remains comparatively robust once a modest option budget is
    available. \textbf{e.} Normalised transfer efficiency (cf. Figure 2d in the
    main text) for HSR agents across option budgets. Transfer efficiency
    improves sharply by increasing the number of options from 4 to 8, and then
    remains broadly stable from 8 to 32 options. This suggests that beyond a
    small minimum set of useful options, HSR is robust to the number of
    incorporated eigenoptions. }
    \label{fig: option_ablation}
  \end{center}
\end{figure}

\clearpage
\begin{figure}[!t]
  \begin{center}
    \centerline{\includegraphics[width=\linewidth]{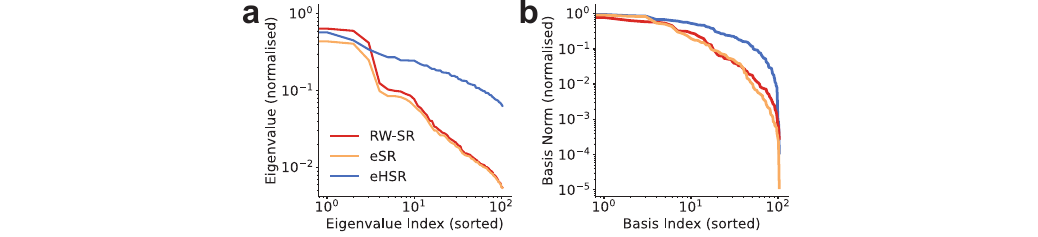}}
    \caption{
      \textbf{Low-dimensional basis of HSR is heavy-tailed distributed.} 
      \textbf{a.} Distribution of eigenvalues of RW-SR, eSR, and eHSR.
      \textbf{b.} Distribution $L_2$-norm of NMF basis of RW-SR, eSR, and eHSR.
    }
    \label{fig: hsr_spectral_supplementary}
  \end{center}
\end{figure}

\clearpage
\begin{figure}[!t]
  \centering
  \includegraphics[width=\linewidth]{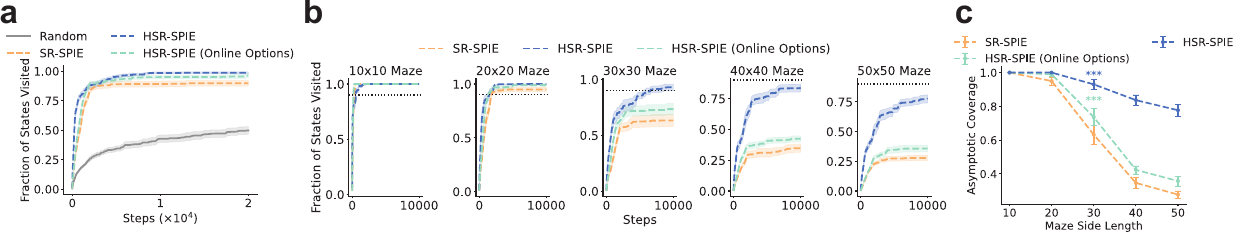}
  \caption{\textbf{HSR with online-constructed options facilitates stronger
  exploration.} \textbf{a.} Learning curves (mean $\pm$ s.e. over 10 random
  seeds) of different agents under pure exploration setting (in the absence of
  extrinsic reward). In addition to the agents considered in the main text (cf.
  Figure~\ref{fig: hsr_exploration_main}), we additionally implement an agent
  with HSR-based SPIE intrinsic reward, given online-constructed options (see
  Appendix~\ref{sec: hsr_online_option} and Algorithm~\ref{alg:
  hsr_online_option}). The HSR-SPIE agent with online-constructed options yields
  comparable exploration efficiency as the HSR-SPIE agent with pre-configured
  eigenoptions, in the $22\times 22$ maze (Figure~\ref{fig:
  hsr_exploration_main}a). \textbf{b.} Same as \textbf{a}, but for mazes with
  increasing sizes. \textbf{c.} Asymptotic coverage (after $10,000$ primitive
  actions) of different agents under increasing maze sizes. Both HSR-SPIE agents
  with pre-computed eigenoptions and online-constructed options consistently
  yield stronger exploration efficiency over SR-SPIE agent across different maze
  sizes (two-way ANOVA test examining the effects of algorithm choice and maze
  size on asymptotic coverage, quantified through the interaction term between
  algorithm choice and maze size; HSR-SPIE vs. SR-SPIE:
  $p=8.7071\times1-^{-18}$, $F_4 = 35.5288$; HSR-SPIE with online-constructed
  options vs. SR-SPIE: $p=1.4480\times10^{-12}$, $F_4 = 21.7160$). Overall, we
  found that the HSR-SPIE intrinsic reward, even under online-constructed
  eigenoptions, consistently outperform SR-SPIE~\cite{yu2023successor}. Note
  that despite the performance gain of HSR-SPIE (relative to SR-SPIE) given
  online-constructed eigenoptions is less salient comparing to its counterpart
  with offline-constructed eigenoptions, it does not rely on the two-stage
  process that requires robust pre-exploration stage, hence is more viable in
  practice.}
  \label{fig: hsr_exploration_online_option}
\end{figure}

\end{document}